\renewcommand{\labelenumi}{(\alph{enumi})}
\renewcommand\theenumi\labelenumi
\newcolumntype{Y}{>{\centering\arraybackslash}X}
\newcommand{\sdooea}{SD-(1+1)~EA\xspace}
\newcommand{\oneoneea}{\ooea}
\newcommand{\rls}{RLS\xspace}
\newcommand{\rlsonetwo}{RLS$^{1,2}$\xspace}
\newcommand{\sdrls}{SD-RLS$^{\text{p}}$\xspace}
\newcommand{\sdrlss}{SD-RLS$^{\text{r}}$\xspace}
\newcommand{\sdrlsss}{SD-RLS$^{\text{m}}$\xspace}
\newcommand{\oofea}{(1+1)~FEA$_\beta$\xspace}
\newcommand{\ooea}{(1+1)~EA\xspace}
\newcommand{\opllga}{(1+($\lambda,\lambda$))~GA\xspace}
\newcommand{\om}{\textsc{OneMax}\xspace}
\newcommand{\onemax}{\om}
\newcommand{\jump}{\textsc{Jump}\xspace}
\newcommand{\jumpo}{\textsc{JumpOff}\xspace}
\newcommand{\needhighmut}{\textsc{NeedHighMut}\xspace}
\newcommand{\preferonebit}{\textsc{PreferOneBitFlip}\xspace}
\newcommand{\needglobalmut}{\textsc{NeedGlobalMut}\xspace}
\newcommand{\pre}{\textsc{pre}}
\newcommand{\suff}{\textsc{suff}}
\newcommand{\R}{\ensuremath{\mathbb{R}}}
\newcommand{\N}{\ensuremath{\mathbb{N}}}
\DeclareMathOperator{\Prob}{Pr}
\newcommand{\ones}[1]{\lvert #1\rvert_1}
\newcommand{\ie}{i.\,e.\xspace}
\newcommand{\eg}{e.\,g.\xspace}
\newcommand{\card}[1]{\lvert #1\rvert}
\DeclareMathOperator{\gap}{gap}
\DeclareMathOperator{\opt}{opt}
\DeclareMathOperator{\im}{Im}
\newcommand{\xopt}{x_{\opt}}
\newcommand{\prob}[1]{\mathord{\Prob}\mathord{\left(#1\right)}}
\newcommand{\expect}[1]{\mathord{\mathrm{E}}\mathord{\left(#1\right)}}
\newcommand{\TG}{TG\xspace}
\newcommand{\erdosrenyi}{Erdős–Rényi\xspace}
\newcommand{\Kn}{$K_n$\xspace}
\newenvironment{proofof}[1]{\begin{proof}[Proof of~#1]}{\end{proof}}
\algnewcommand{\IfThenElse}[3]{
  \State \algorithmicif\ #1\ \algorithmicthen\ #2\ \algorithmicelse\ #3}
\algnewcommand{\IfThen}[2]{
  \State \algorithmicif\ #1\ \algorithmicthen\ #2}
\DeclarePairedDelimiter\floor{\lfloor}{\rfloor}
\newtheorem{lemma}{Lemma}
\newtheorem{theorem}{Theorem}
\newtheorem{corollary}{Corollary}
\begin{document}

\title{Stagnation Detection in \\ Highly Multimodal Fitness Landscapes}
\author{
  Amirhossein Rajabi\\
  Technical University of Denmark \\
	Kgs. Lyngby \\
	Denmark \\
amraj@dtu.dk \\
  \and
    Carsten Witt\\
  Technical University of Denmark \\
	Kgs. Lyngby \\
	Denmark \\
cawi@dtu.dk \\
}

\maketitle

\begin{abstract}
Stagnation detection has been proposed as a mechanism for randomized search 
heuristics to escape from local optima by automatically increasing the size of the neighborhood to find the so-called 
gap size, \ie, the distance to the next improvement. 
Its usefulness has mostly been considered in simple multimodal landscapes with few local optima that could be crossed one after another. In multimodal landscapes with a more complex location of optima of similar gap size, stagnation detection suffers from the fact that the neighborhood size is frequently reset to~$1$ without 
using gap sizes that were promising in the past.

In this paper, we investigate a new mechanism called \emph{radius memory} which can be added to stagnation detection to 
control the search radius  more carefully 
 by giving preference to values that were successful in the past.
We implement this idea in an algorithm called \sdrlsss and show compared to previous variants of stagnation detection that it yields 
speed-ups for linear functions under uniform constraints and 
the minimum spanning 
tree problem. Moreover, its running time does 
not significantly deteriorate on unimodal functions and a 
generalization of the 
\jump benchmark.
Finally, we present experimental results carried out to study \sdrlsss and compare it with other algorithms.
\end{abstract}

\section{Introduction}
The theory of self-adjusting evolutionary algorithms (EAs) is a research area that 
has made significant progress in recent years \citep{DoerrDoerrParameterBookChapter}. 
For example, a self-adjusting choice of mutation and crossover probability in the 
so-called \opllga allows an expected optimization time of $O(n)$ on \om, which is 
not possible with any static setting \citep{DoerrEbelTCS15,DoerrDoerrAlgorithmica18}. Many studies 
focus on unimodal problems, while self-adjusting EAs for multimodal problems 
in discrete search spaces, more precisely for pseudo-Boolean optimization,  
were investigated only recently  from a theoretical 
runtime perspective. \emph{Stagnation detection} proposed in \cite{RajabiWittGECCO20} addresses a shortcoming of classical 
self-adjusting EAs, 
which try to learn promising parameter settings from fitness
improvements. 
Despite the absence of a fitness improvements 
when the best-so-far solution is at a local optimum,
stagnation detection learns from the number of 
unsuccessful steps and adjusts the mutation rate if this number exceeds 
a certain threshold. Thanks to this mechanism, the so-called \sdooea 
proposed in \cite{RajabiWittGECCO20} optimizes the classical \jump 
function with~$n$ bits and gap size~$m$ in 
expected time $O((en/m)^m)$, which 
corresponds asymptotically to the best possible time achievable 
through standard bit mutation, more precisely when each bit is 
flipped independently with probability~$m/n$. It is worth pointing 
out that stagnation detection does not have any prior information about 
the gap size~$m$.

Although leaving a local optimum requires a certain number of bits
 to be flipped simultaneously, which we call the \emph{gap size}, 
 the \sdooea mentioned above still 
 performs independent bit flips. Therefore, even 
 for the best setting of the mutation rate, 
 only the expected number of 
 flipping bits equals the gap size while the actual number of flipping 
 bits may be different. This has motivated \citet{RajabiWittEvo21} to consider the $k$-bit flip operator 
 flipping a uniform random subset of~$k$ bits 
 as known from randomized local search (RLS) \citep{DoerrDoerrAlgorithmica18} 
 and to adjust~$k$ via stagnation detection. Compared to the \sdooea, 
 this  allows a speed-up of $(\frac{ne}{m})^m/\binom{n}{m}$ (up to lower-order terms) on functions with 
 gap size~$m$ and a speed-up of up to roughly~$e=2.718\dots$ on unimodal
 functions while still being able to search globally.
 
 \citet{RajabiWittEvo21} emphasize that their RLS with 
 self-adjusting $k$-bit flip operator resembles variable neighborhood search 
 \citep{HansenMladenovic18} 
 but features less determinism by drawing the~$k$ bits to be flipped uniformly
 at random instead of searching the neighborhood in a fixed order. 
 The random behavior still maintains many characteristics 
 of the original RLS, 
 including independent stochastic decisions which ease the runtime analysis.
 If the bit positions to be flipped follow a deterministic scheme as 
 in quasirandom EAs \citep{DoerrFW10}, dependencies complicate the analysis 
 and make it difficult to apply tools like drift analysis. However, 
 a drawback 
 of the randomness is that the independent, uniform choice of the set of 
 bits to be flipped leaves a positive probability of missing an 
 improvement within the given time a specific parameter value~$k$ is tried. 
 Therefore, the first RLS variant with stagnation detection proposed 
 in \cite{RajabiWittEvo21} and called SD-RLS there 
 has infinite expected runtime 
 in general, but is efficient with high probability, where the success probability of the algorithm
 is controlled via the threshold value for the number of 
 steps without fitness improvement that trigger a change of~$k$. We remark here that the problem with 
 infinite runtime is not existent with the independent bit flips as long 
 as each bit is flipped with probability in the open interval $(0,1)$.
 
 In this paper, we denote by \sdrls the simple SD-RLS just proposed. To 
 guarantee finite expected optimization time, 
 \citet{RajabiWittEvo21} introduce a second variant that  repeatedly 
 returns to lower so-called (mutation) strengths, \ie,  number of bits flipped, while the algorithm 
 is still waiting for an improvement. The largest neighborhood size (\ie, 
 number of bits flipped) is denoted as \emph{radius~$r$} and, in essence, the strength~$s$ 
 is decreased in a loop 
 from~$r$ to~$1$ before the radius is increased. 
 Interestingly, the additional time spent at exploring smaller strengths 
 in this loop, with the right choice  of phase lengths, contributes 
 only a lower-order term to the typical time that \sdrls has in the absence 
 of errors. The resulting algorithm (Algorithm~\ref{alg:sdrls_robust}) is 
 called SD-RLS$^*$ in \cite{RajabiWittEvo21}, but in this paper referred to by  
 \sdrlss, where the label~\emph{r} denotes \emph{robust}.
 
 As already explained in \cite{RajabiWittEvo21}, \sdrlss (and also the plain 
 \sdrls) return to strength~$1$ after every fitness improvement and try this strength 
 for a sufficiently large time to find an improving one-bit flip with 
 high probability. This behavior can be undesired on highly multimodal 
 landscapes where progress is typically only made via larger strengths.
 As an example, the minimum spanning tree (MST) problem as originally 
 considered for the \ooea and an RLS variant in \cite{NeumannW07} requires 
 two-bit flips to make progress in its crucial optimization phase. Both 
 theoretically and experimentally, Rajabi and Witt~\citep{RajabiWittEvo21} 
 observed that \sdrlss is less efficient than the RLS variant from 
 \cite{NeumannW07} since low, useless strengths (here~$1$) are tried for 
 a too long period of time. On the other hand, it can also be risky 
 exclusively to proceed with the strength that was last found to be working 
 if the fitness landscape becomes easier
 at some point and progress can again 
 be made using smaller strengths.
 
 In this paper, we address this trade-off between exploiting high strengths
 that were necessary in the past and again trying smaller strengths for a certain 
 amount of time. We propose a mechanism called \emph{radius memory} that 
 uses the last successful strength value to assign a reduced budget of iterations 
 to smaller strengths. This budget
 is often much less than the number stereotypically tried
 in \sdrlss after every fitness improvement. However, 
 the budget  must be balanced carefully to allow the 
 algorithm to adjust itself to gap sizes becoming smaller over the run 
 of the algorithm. Our choice of budget is based on the number of 
 iterations (which is the same as the number of fitness 
 evaluations) passed to find the latest improvement and assigns 
 the same combined amount 
 of time, divided by $\ln n$, to smaller strengths tried afterwards. This choice, encorporated 
 in our new algorithm \sdrlsss, basically limits the time spent at unsuccessful strengths 
 by less than the waiting time for an improvement with the last successful strength but is
 still big enough to adjust to smaller strength sufficiently quickly. On the one hand, it
 (up to lower-order terms) preserves the runtime bounds 
 on general unimodal function classes and jump functions 
 shown for \sdrlss in \cite{RajabiWittEvo21}.  
 On the other hand, it significantly reduces the 
 time for the strength to 
 return to larger values on two highly multimodal problems, 
 namely optimization of linear functions under uniform constraints 
 and the MST. Although these ideas are implemented in a simple RLS 
 maintaining one individual only, we implicitly consider stagnation detection 
 as a module that can be added to other algorithms as shown 
 in \cite{RajabiWittGECCO20} and very recently in \cite{DoerrZhengAAAI2021} for multi-objective optimization. 
 Concretely, we could also use the stagnation detection 
 with radius memory in population-based algorithms. 
 
 This paper is structured as follows. In Section~\ref{sec:preliminaries}, 
 we define the algorithms considered and collect some important technical
 lemmas. Section~\ref{sec:analysis} presents time bounds for the 
 new algorithm \sdrlsss to leave local optima and  
 applies these to obtain
 bounds on the expected optimization time on unimodal
 and jump functions. 
 Moreover, it includes in Lemma~\ref{lem:realradius} the crucial analysis 
 of the time for the strength to settle at 
 smaller values when an improvement is missed. 
 Thereafter, these results are used in Section~\ref{sec:speedups} to analyze 
 \sdrlsss on linear functions under uniform
 constraints and to show a linear-time speedup 
 compared to the \sdrlss algorithm in \cite{RajabiWittEvo21}. Section~\ref{sec:mst}
 shows that \sdrlsss optimizes MST instances on graphs 
 with~$m$ edges 
 in expected time at most $(1+o(1))m^2\ln m$, which 
 is by an asymptotic factor of~$2$ faster than 
 the bound for \rlsonetwo from \cite{NeumannW07}
 and represents, to the best of our knowledge, 
 the first asymptotically tight analysis of a 
 globally searching (1+1)-type algorithm on the 
 problem. In Section~\ref{sec:slowdown}, we present 
 an example where the radius memory is detrimental and leads 
 to exponential optimization time with probability 
 $1-o(1)$ while the original \sdrlss from 
 \cite{RajabiWittEvo21} is efficient. Section~\ref{sec:experiments} presents experimental
 supplements to the analysis of  \sdrlss 
 and \sdrlsss and comparisons with other algorithms 
 from the literature
 before we finish with some conclusions.
 
\section{Preliminaries}
\label{sec:preliminaries}

\subsection{Algorithms}
We start by describing a class of
classical RLS algorithms 
and the considered 
extensions with stagnation detection. 
Algorithm~\ref{alg:rls-classic} is a simple 
hill climber  that uses a static 
strength~$s$ and always flips~$s$ bits uniformly 
at random. The special case where~$s=1$, \ie, 
using one-bit flips, has been  
investigated thoroughly in the literature 
\cite{DoerrDoerrAlgo16} and 
is mostly just called RLS.

\begin{algorithm}[htb]
	\caption{\rls with static strength~$s$ for the maximization of 
	$f\colon\{0,1\}^n \to \mathbb{R}$}
	\label{alg:rls-classic}
	\begin{algorithmic}
		\State Select $x$ uniformly at random from $\{0, 1\}^n$
		\For{$t \gets 1, 2, \dots$}
		\State Create $y$ by flipping $s$ bit(s) in a copy of $x$.
		\IfThen{$f(y) \ge f(x)$}{$x \gets y$.}
		\EndFor
	\end{algorithmic}
\end{algorithm}

The algorithm \rlsonetwo (which also 
is just called RLS in \cite{NeumannW07})
is an extension of this classical RLS (\ie, Algorithm~\ref{alg:rls-classic} with strength~$1$) choosing  
strength~$s\in\{1,2\}$ uniformly before flipping~$s$ bits. This 
extension is crucial for making progress on the MST problem.

In \cite{RajabiWittEvo21}, \rls is enhanced by stagnation
detection, leading to Algorithm~\ref{alg:sdrls_robust}. 
In a nutshell, the algorithm increases its strength 
after a certain number of unsuccessful steps 
according to the threshold value $\binom{n}{s}\ln R$ which has been chosen 
to bound the so-called failure probability at strength~$s$, \ie, 
the probability of not finding an improvement at Hamming distance~$s$, by 
at most $\bigl(1-1/\binom{n}{s}\bigr)^{\binom{n}{s}\ln R}\le 1/R$.
It also incorporates logic to return to smaller strengths repeatedly by maintaining the so-called 
radius value~$r$. All variables and parameters will be discussed 
in detail below when we come to our extension with 
radius memory. Algorithm~\ref{alg:sdrls_robust} 
is called SD-RLS$^{*}$ in \cite{RajabiWittEvo21} 
since that paper also discusses a simpler variant 
called SD-RLS without the logic related to the 
radius variable. However, that variant 
is not robust and has infinite expected 
optimization time in general even on unimodal problems.
We call Algorithm~\ref{alg:sdrls_robust} \sdrlss 
since, as argued in \cite{RajabiWittEvo21}, the 
radius makes the algorithm \emph{robust}.

\begin{algorithm}[htb]
	\caption{RLS with robust stagnation detection (\sdrlss) for the maximization of 
	$f\colon\{0,1\}^n \to \mathbb{R}$}
	\label{alg:sdrls_robust}
	\begin{algorithmic}
		\State Select $x$ uniformly at random from $\{0, 1\}^n$.
		\State $r \gets 1$, $s \gets 1$, $u\gets 0$.
		\For{$t \gets 1, 2, \dots$}
		\State Create $y$ by flipping $s$ bits in a copy of $x$ uniformly at random.
		\State $u\gets u+1$.
		\If{$f(y) > f(x)$}
		\State $x \gets y$.
		\State $r\gets 1$, $s\gets 1$, $u\gets 0$.
		\ElsIf {$f(y) = f(x)$ \textbf{and} $r=1$}
		\State $x \gets y$.

		\EndIf
		\If{$u > \binom{n}{s}\ln R$}
		\If{$s=1$}
		    \IfThenElse {$r<n/2$}{$r\gets r+1$}{$r\gets n$}
		    \State $s\gets r.$
		    \Else
		    \State $s\gets s-1$.
		\EndIf
		\State $u\gets 0$.
		\EndIf
		\EndFor
	\end{algorithmic}
\end{algorithm}

In the following, we present in 
Algorithm~\ref{alg:sdrls_star} the new 
algorithm \sdrlsss using 
stagnation detection and radius memory. It extends 
\sdrlss by 
adding logic for setting the helper variable~$B$ and by 
minimizing the original threshold $\binom{n}{s}\ln R$ 
for the number of unsuccessful steps with $\frac{u}{(\ln n)(r-1)}$ via 
the variable. Another 
minor change is that it increases the strength 
from~$1$ to the radius~$r$ instead of decreasing it for the sake of simplicity of the new algorithm and the proofs. We 
describe the algorithm in more detail now.

After a strict improvement with strength~$s$ (which becomes the initial radius~$r$ for the next search point), 
the algorithm uses all strengths~$s^\prime<r$ for $\min\{\frac{u}{(\ln n)(r-1)}, \binom{n}{s^\prime}\ln R\}$ attempts, 
where $u$ is the value of the counter at the time that the previous improvement happened. Once the current strength becomes 
equal to the current radius,
the threshold becomes $\min\{\infty, \binom{n}{s^\prime}\ln R\}=\binom{n}{s^\prime}\ln R$ for the rest of iterations with the current search point. Therefore, the cap at $\frac{u}{(\ln n)(r-1)}$ is only effective as long as the current radius 
equals~$r$ and the current strengths are smaller than~$r$. 
For technical reasons, the radius increases directly to~$n$ when 
it has passed $n/2$. Moreover, as another technical detail, we accept search 
points of equal fitness only if the current radius is one (leading to 
the same acceptance behavior as in classical RLS, see Algorithm~\ref{alg:rls-classic}), whereas only strict 
improvements are accepted at larger radii.

The factor $1/\ln n$ appearing in the first argument of the minimum 
\[\min\{\frac{u}{(\ln n)(r-1)}, \binom{n}{s^\prime}\ln R\}\] is a parameter choice that has turned out robust and useful 
in our analyses. The choice $\frac{u}{r-1}$, \ie, an implicit constant of~$1$, could seem more natural here since then the 
algorithm would look at smaller strengths as often as the last successful strength was tried; however, this would make
our forthcoming bounds worse by a constant factor.

As mentioned above, stagnation detection has also a parameter~$R$ to bound the probability of failing to find an improvement at the ``right'' strength. We formally prove in Lemma~\ref{lem:failure-probability} that the probability of not finding an improvement where there is a potential of making progress is at most~$1/R$. The recommendation of~$R$ for \sdrlss in \cite{RajabiWittEvo21} is still valid for \sdrlsss, \ie,  $R\ge n^{3+\epsilon}\cdot \card{\im f}$ for a constant~$\epsilon$ (where $\im f$ is the 
image set of~$f$), resulting in that the probability of ever missing an improvement at the right strength is sufficiently small throughout the run. However, in this paper, by improving some analyses, we recommend a tighter value for~$R$, namely $R\ge \max\{S, n^{3+\epsilon}\}$ for an arbitrary constant~$\epsilon>0$ where $S$ is an upper bound on the number of improvements during the run. Obviously, we can always choose $S = \card{\im f}$.

\begin{algorithm}[htb]
	\caption{RLS with robust stagnation detection and radius memory mechanism (\sdrlsss) for the maximization of 
	$f\colon\{0,1\}^n \to \mathbb{R}$ }
	\label{alg:sdrls_star}
	\begin{algorithmic}
		\State Select $x$ uniformly at random from $\{0, 1\}^n$.
		\State $r \gets 1$, $s \gets 1$, $u\gets 0$, $B\gets \infty$.
		\For{$t \gets 1, 2, \dots$}
		\State Create $y$ by flipping $s$ bits in a copy of $x$ uniformly at random.
		\State $u\gets u+1$.
		\If{$f(y) > f(x)$}
		\State $x \gets y$.
		\State $r\gets s$, $s\gets 1$.
		\IfThenElse{$r>1$}{$B\gets \frac{u}{(\ln n)(r-1)}$}{$B\gets \infty$.}
		\State $u\gets 0$.
		\ElsIf {$f(y) = f(x)$ \textbf{and} $r=1$}
		\State $x \gets y$.

		\EndIf
		\If{$u > \min\{B,\binom{n}{s}\ln R$\}}
		\If{$s=r$}
		    \IfThenElse {$r<n/2$}{$r\gets r+1$}{$r\gets n$}
		    \State $s\gets 1.$
		    \Else
		    \State $s\gets s+1$.
		    \IfThen{$s=r$}{$B\gets \infty$}
		\EndIf
		\State $u\gets 0$.
		\EndIf
		\EndFor
	\end{algorithmic}
\end{algorithm}

The \emph{runtime} or the \emph{optimization time} of a search heuristic on a function~$f$ 
 is the first point time~$t$ where a search point of optimal fitness has been created; often  the expected runtime, \ie, the expected value of this time, is analyzed.

\subsection{Mathematical tools}
\label{sec:tools}
In the following lemma, which has been taken from~\cite{RajabiWittEvo21}, we have some combinatorial inequalities that will be used in the analyses of the algorithms. 
 The part \ref{lem:partial-sum:sum} in Lemma~\ref{lem:partial-sum} seems to be well known and has already been proved in \cite{MathoverflowML17} and is also a consequence of Lemma~1.10.38 
in \cite{DoerrProbabilisticBookChapter}.  The part~\ref{lem:partial-sum:M} follows from elementary manipulations.

\begin{lemma}[Lemma~1 in \cite{RajabiWittEvo21}] \label{lem:partial-sum}
For any integer $m\le n/2$, we have 
	\begin{enumerate}
	    \item $\sum_{ i=1}^{m}\binom{n}{i}\leq \frac{n-(m-1)}{n-(2m-1)} \binom{n}{m}$, \label{lem:partial-sum:sum}
        \item $\binom{n}{M}\le \binom{n}{m}\left(\frac{n-m}{m}\right)^{M-m}$ for $m<M<n/2$. \label{lem:partial-sum:M}
	\end{enumerate}
\end{lemma}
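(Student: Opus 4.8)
The plan is to prove both parts by controlling the ratio of consecutive binomial coefficients, which is the standard handle for partial sums and for comparisons across the central region $i<n/2$. The elementary identity I would start from is $\binom{n}{i}/\binom{n}{i-1} = (n-i+1)/i$, equivalently $\binom{n}{i-1}/\binom{n}{i} = i/(n-i+1)$; monotonicity of this quantity in $i$ over the range $i\le n/2$ is what makes everything work. Since $m\le n/2$, the term $\binom{n}{m}$ is the largest in the sum of part~\ref{lem:partial-sum:sum}, so the natural move is to factor it out and bound the remaining terms by a geometric tail.

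For part~\ref{lem:partial-sum:sum}, I would read the sum from the top downward. For every $i\le m$ the ratio $\binom{n}{i-1}/\binom{n}{i}=i/(n-i+1)$ is increasing in $i$ and hence at most $m/(n-m+1)$. Telescoping $k$ such factors gives $\binom{n}{m-k}/\binom{n}{m}\le\bigl(m/(n-m+1)\bigr)^{k}$, so
\[
\sum_{i=1}^{m}\binom{n}{i}\le \binom{n}{m}\sum_{k=0}^{m-1}\left(\frac{m}{n-m+1}\right)^{k}\le \binom{n}{m}\sum_{k=0}^{\infty}\left(\frac{m}{n-m+1}\right)^{k}.
\]
Because $m\le n/2$ forces $m/(n-m+1)<1$ (indeed $n-2m+1\ge1$), the geometric series converges to $1/\bigl(1-\tfrac{m}{n-m+1}\bigr)=\tfrac{n-m+1}{n-2m+1}$. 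Rewriting $n-m+1=n-(m-1)$ and $n-2m+1=n-(2m-1)$ yields exactly the claimed factor $\tfrac{n-(m-1)}{n-(2m-1)}$, so the constant comes out on the nose rather than as an inequality, which is the one delicate point to get right here.

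For part~\ref{lem:partial-sum:M}, I would instead read a telescoping \emph{product} upward from $m$ to $M$. Writing $\binom{n}{M}=\binom{n}{m}\prod_{i=m}^{M-1}\tfrac{n-i}{i+1}$ and noting that $(n-i)/(i+1)$ is decreasing in $i$, each of the $M-m$ factors is at most $(n-m)/(m+1)\le (n-m)/m$, giving $\binom{n}{M}\le\binom{n}{m}\bigl((n-m)/m\bigr)^{M-m}$ directly. The condition $M<n/2$ guarantees all these factors are positive and that the bound is meaningful. I expect part~\ref{lem:partial-sum:sum} to be the only part requiring any care, specifically verifying that bounding every ratio by its largest value $m/(n-m+1)$ and extending the finite sum to the full geometric series still collapses to precisely $\tfrac{n-(m-1)}{n-(2m-1)}$; part~\ref{lem:partial-sum:M} is a one-line monotonicity-plus-telescoping argument with no hidden obstacle.
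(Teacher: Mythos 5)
Your proof is correct; the paper itself gives no proof of this lemma (it cites Lemma~1 of \cite{RajabiWittEvo21}, a MathOverflow post, and Lemma~1.10.38 of \cite{DoerrProbabilisticBookChapter} for part~(a) and calls part~(b) elementary), and your geometric-series bound via the consecutive-ratio $i/(n-i+1)\le m/(n-m+1)$ is exactly the standard argument behind those references, landing on the constant $\tfrac{n-(m-1)}{n-(2m-1)}$ on the nose. Part~(b) via the telescoping product with each factor bounded by $(n-m)/m$ is likewise the intended elementary manipulation.
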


\section{Analysis of the Algorithm \sdrlsss}
\label{sec:analysis}
In this section, we shall show bounds on the optimization time of  \sdrlsss in addition to useful technical  lemmas used in different analyses in the paper.

\subsection{Expected Times to Leave a Search Point}
In this subsection, we will prove some bounds on the time to leave a search point that has a Hamming 
distance larger than~$1$ to all improvements.
Let us define by the \emph{epoch} of~$x$ the sequence of iterations where $x$ is the current search point. In contrast to the previously proposed algorithm \sdrlss in \cite{RajabiWittEvo21}, the optimization time of  \sdrlsss for making progress with the current search point~$x$ is also dependent to the progress in the epoch of the second-to-last search point, \ie, the parent of~$x$. In detail, the algorithm in epoch~$x$ starts with parameters~$r_0$ and $B_0$, which are set to the strength escaping the parent of~$x$ and 
the number of fitness calls at that strength divided by $(r_0-1)\ln n$, respectively. Therefore, to analyze the running time, we also need to consider those parameters. Hereinafter, we define $T_{x,r_0,B_0}$ as the number of steps \sdrlsss takes to find an improvement  from the current search point~$x$ with starting radius~$r_0$ (\ie, 
at the beginning of the epoch) and budget~$B_0$ in the current epoch.

We recall the so-called \emph{gap} of the point $x\in \{0,1\}^n $ defined in \cite{RajabiWittGECCO20}
as the minimum Hamming distance to points with the strictly larger fitness function value. Formally, $\gap(x^*)=\infty$ where $x^*$ is an optimum, and for the rest of the points, we define
\[\gap(x)\coloneqq \min\{H(x,y) : f(y)>f(x) , y\in \{0,1\}^n\}. \]
It is not possible to make progress by flipping less than $\gap(x)$ bits of the current search point~$x$, but if the algorithm uses the $s$-flip with $s=\gap(x)$, it can make progress with a positive probability.

In order to estimate the escape time bounds, we consider two cases where $\gap(x)\ge r_0$ and where $\gap(x)< r_0$. In the first case, it costs $B_0$ fitness function calls for the algorithm to increase the radius~$r$ to~$r_0$. Then, the analysis of the rest of the iterations is the same as Theorem~3 for the algorithm~\sdrlss (the algorithm without radius memory)  in \cite{RajabiWittEvo21}. Obviously, the algorithm is asymptotically as efficient as \sdrlss in this case. However, in the second case where $\gap(x)< r_0$, the proposed algorithm can be outperformed by  \sdrlss since if it fails to improve at every radius, the algorithm meets larger strengths, which are costly. It means that although the gap size of the current search point is less than its parent, the escape time is the same in the worst case scenario. However, in the the rest of the paper, we show that this is not harmful to the optimization time because it is captured by the escape time of the previous epoch, and after a short time, the radius ``recovers'' and is set to the gap of the current search point, as proved in Lemma~\ref{lem:realradius}.

Concretely, we present the next theorem presenting escape time bounds and prove it by the end of this subsection. 

\begin{theorem} \label{theo:leavex}
Let $x\in\{0,1\}^n$ be the new current search point of \sdrlsss with starting radius~$r_0$, budget~$B_0$, and $R\ge n^{3+\epsilon}$ for an arbitrary constant $\epsilon>0$
 on a  pseudo-Boolean function $f\colon\{0,1\}^n\to \R$.
	Define $T_{x,r_0,B_0}$ as the time to create a strict improvement if $m=\gap(x)$. Then, 
	\begin{enumerate}
	    \item \label{theo:leavex:greater_m} if $m \ge r_0$,
	    \begin{align*}
    		\expect{T_{x,r_0,B_0}} \le     
    		\begin{cases}
    		\binom{n}{m}\cdot \mathord{O}\mathord{\left(1+\frac{m^3}{n-2m}\ln R\right)} & \text{if } m<n/2,\\
    		O(2^nn \ln R) & \text{otherwise,}
    		\end{cases}
	    \end{align*}
	
	\item \label{theo:leavex:greater_r} or if $m < r_0$, 
	    \begin{align*}
    		\expect{T_{x,r_0,B_0}} \le     
    		\begin{cases}
    		\binom{n}{r_0}\cdot \mathord{O}\mathord{\left(1+\frac{r_0^2}{n-2r_0}\ln R\right)} & \text{if } r_0<n/2,\\
    		O(2^n \ln R) & \text{otherwise.}
    		\end{cases}	    \end{align*}
	\end{enumerate}
\end{theorem}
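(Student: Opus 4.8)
The plan is to decompose the epoch of~$x$ into \emph{phases}, a phase being the block of iterations spent at one fixed pair (radius~$r$, strength~$s$) until the counter~$u$ first exceeds its threshold $\min\{B,\binom{n}{s}\ln R\}$. Because the strength runs through $s=1,2,\dots,r$ within a fixed radius and the radius is incremented only after $s=r$ has been tried, the whole epoch is an ordered concatenation of such phases, and I would bound $\expect{T_{x,r_0,B_0}}$ by summing the expected phase lengths, each weighted by the probability of ever reaching that phase. Two structural observations drive the argument. First, no improvement can be produced at any strength $s<m=\gap(x)$, so every phase with $s<m$ is necessarily unsuccessful and only contributes waiting time. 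Second, the very first time the sweep reaches $s=r_0$ the algorithm sets $B\gets\infty$, so every phase from the second sweep onward uses the \emph{full} threshold $\binom{n}{s}\ln R$; by the failure-probability bound (Lemma~\ref{lem:failure-probability}) a full-threshold phase at the ``right'' strength then misses an improvement with probability at most~$1/R$.

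For part~\ref{theo:leavex:greater_m} ($m\ge r_0$), the cap~$B_0$ is applied only to phases with $s<r_0\le m$, all of which are unsuccessful anyway, so replacing every capped threshold by the full threshold can only increase the bound. After this relaxation the time spent before the radius reaches~$m$ is at most $\sum_{r=r_0}^{m}\sum_{s=1}^{r}\binom{n}{s}\ln R$, which is exactly the sum treated in the escape-time analysis of \sdrlss (Theorem~3 in \cite{RajabiWittEvo21}); applying Lemma~\ref{lem:partial-sum}\ref{lem:partial-sum:sum} to the inner partial sums collapses them onto the dominant term $\binom{n}{m}\ln R$ and leaves the stated polynomial factor $O\!\left(1+\frac{m^3}{n-2m}\ln R\right)$, the leading $\binom{n}{m}$ accounting for the expected success time at $s=m$. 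The iterations at radii $r>m$ form a geometric tail: the probability of still running at radius $m+k$ is at most $(1/R)^k$, since strength~$m$ is retried at its full threshold in each intervening radius, and since $R\ge n^{3+\epsilon}$ exceeds the per-step growth rate $\binom{n}{m+k}/\binom{n}{m+k-1}\le n$ of the binomials, this tail is of lower order. The regime $m\ge n/2$ is separate only because the radius then jumps directly to~$n$, where every strength up to~$n$ is swept once, giving the crude bound $O(2^nn\ln R)$.

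For part~\ref{theo:leavex:greater_r} ($m<r_0$), the obstruction is that the productive strength $s=m$ is first visited \emph{during} the first sweep, where its threshold is still capped at $\min\{B_0,\binom{n}{m}\ln R\}$. If $B_0<\binom{n}{m}\ln R$ the strength-$m$ phase may be too short to guarantee success, so the improvement can be missed with probability exceeding~$1/R$. I would therefore bound the first sweep pessimistically: it may run all the way up to $s=r_0$, whose full-threshold phase costs $\binom{n}{r_0}\ln R$ and dominates, before the radius is incremented to $r_0+1$ and $B$ is reset to~$\infty$. From the second sweep on, $s=m$ is retried at its full threshold and hit with failure probability at most~$1/R$, so the epoch ends with high probability on the second sweep. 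Summing the first sweep, bounded by $\sum_{s=1}^{r_0}\binom{n}{s}\ln R$, and the smaller second sweep up to $s=m$ via Lemma~\ref{lem:partial-sum}\ref{lem:partial-sum:sum} yields the dominant $\binom{n}{r_0}$ with factor $O\!\left(1+\frac{r_0^2}{n-2r_0}\ln R\right)$; the exponent~$2$ here, as opposed to~$3$ in part~\ref{theo:leavex:greater_m}, reflects that only $O(1)$ sweeps (radii~$r_0$ and $r_0+1$) are needed before success rather than the up to $\Theta(m)$ sweeps required there to raise the radius from~$r_0$ to~$m$. The failure tail over radii beyond $r_0+1$ is again geometric in~$1/R$ and of lower order, and $r_0\ge n/2$ is handled by the jump-to-$n$ rule, giving $O(2^n\ln R)$.

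The step I expect to be most delicate is the interplay of the failure recursion with the budget cap: I must argue that, conditioned on having missed the improvement at the right strength in one sweep, the next sweep genuinely runs with $B=\infty$, so that its failure probability is truly at most~$1/R$, and that the expected cost accumulated over the exponentially unlikely deep sweeps (up to strength~$n$) stays a lower-order term once $R\ge n^{3+\epsilon}$. A secondary subtlety is that when $r_0=1$ the algorithm accepts equal-fitness moves at strength~$1$ and may wander on a plateau; since this occurs only with $B_0=\infty$ and leaves the phase-length accounting of part~\ref{theo:leavex:greater_m} intact, it is handled exactly as in the classical-RLS part of the \sdrlss analysis.
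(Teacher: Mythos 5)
Your proposal is correct and follows essentially the same route as the paper: what you describe as "summing expected phase lengths weighted by the probability of ever reaching that phase, conditioned on failing all earlier ones" is exactly what the paper packages as Lemma~\ref{lem:runtime-of-Em}, applied with $k=m-1$ for part~\ref{theo:leavex:greater_m} and with $k=r_0$ (after pessimistically assuming failure in the budget-capped first sweep) for part~\ref{theo:leavex:greater_r}, combined with Lemma~\ref{lem:failure-probability}, the partial-sum collapses of Lemma~\ref{lem:partial-sum}, and the geometric tail from $R\ge n^{3+\epsilon}$, plus the same jump-to-radius-$n$ treatment of the $\ge n/2$ regimes. The one cosmetic inaccuracy is your attribution of the $m^3$-versus-$r_0^2$ exponents to the number of sweeps needed: in the paper's calculation the extra factor in part~\ref{theo:leavex:greater_m} arises from the index shift $\binom{n}{m-1}=\frac{m}{n-m+1}\binom{n}{m}$ when the waiting cost of the first $m-1$ phases is normalized by $\binom{n}{m}$, while in part~\ref{theo:leavex:greater_r} no such shift occurs because the dominant phase is already at radius $r_0$.
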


For proving Theorem~\ref{theo:leavex}, we need some definitions and lemmas as follows. Let \textbf{phase~$r$} consist of all points in time where radius~$r$ is used in the algorithm. 
Let \emph{$E_r$} be the event of \textbf{not} finding the optimum within phase~$r$. In Lemma~\ref{lem:failure-probability}, we show that the failure probability is at most $1/R$ in phases~$r$ to $n/2$ and zero in the last phase (\ie, at radius~$n$). In the following lemma, we show that in each radius which is at least the gap size, the algorithm makes progress with high probability. 

\begin{lemma} \label{lem:failure-probability}
	Let $x\in\{0,1\}^n$ be the current search point of \sdrlsss on a pseudo-Boolean fitness function $f\colon\{0,1\}^n\to \R$ 
	and let $m=\gap(x)$ and $B\ge\binom{n}{m} \ln R$. 
	Then 
	\begin{align*}
		\prob{E_r} \le     
		\begin{cases}
		1 & \text{ if }  r<m\\
		\frac{1}{R} & \text{ if } m \le r<\frac n2\\
		0 & \text{if } r=n.
		\end{cases}
	\end{align*}
\end{lemma}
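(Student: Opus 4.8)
The plan is to reduce the failure probability in phase~$r$ to the probability of missing a strict improvement during the single sub-phase in which the strength equals the gap size~$m$, and then to estimate this probability by the standard $(1-1/N)^{N\ln R}$ bound. The two facts I would use throughout are, first, that by the definition of $\gap(x)$ there is at least one improving $m$-bit flip, so a single step at strength $s=m$ produces a strict improvement with probability at least $1/\binom{n}{m}$ (and the $m$-subsets drawn in different steps are independent), whereas no step at strength $s<m$ can improve at all; and second, that at radii $r\ge 2$ only strict improvements are accepted, so the current search point and hence the value $m=\gap(x)$ stay fixed throughout any phase with $r\ge 2$. Equal-fitness moves occur only at $r=1$, which is relevant only in the edge case $m=1$ where strength~$1$ is already the critical strength.

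The case $r<m$ is immediate: every strength used in phase~$r$ is at most $r<m$, so no step can create a strict improvement and $\prob{E_r}\le 1$ holds trivially. For the main case $m\le r<n/2$, the key step is to confirm that strength $s=m$ is tried at least $\binom{n}{m}\ln R$ times inside phase~$r$. Since the strength in a phase runs through $1,2,\dots,r$ and $m\le r$, the value $s=m$ is reached unless an improvement already occurred (in which case $E_r$ does not hold). When $m<r$ the number of attempts at strength~$m$ is $\min\{B,\binom{n}{m}\ln R\}$, which equals $\binom{n}{m}\ln R$ exactly because of the hypothesis $B\ge\binom{n}{m}\ln R$; when $m=r$ the variable~$B$ has already been reset to~$\infty$ by the time $s=r$ is reached, so the threshold is again $\binom{n}{m}\ln R$. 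Since $E_r$ implies a failure in all of these independent attempts, in either sub-case
\[
\prob{E_r}\le\left(1-\frac{1}{\binom{n}{m}}\right)^{\binom{n}{m}\ln R}\le e^{-\ln R}=\frac1R,
\]
where I used $(1-1/N)^{N}\le e^{-1}$ with $N=\binom{n}{m}$.

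Finally, for $r=n$ the radius never increases further, so phase~$n$ terminates only upon finding an improvement; until then the strength cycles through $1,\dots,n$ indefinitely, and therefore strength~$m$ is used in infinitely many independent steps, each improving with probability at least $1/\binom{n}{m}>0$. Hence an improvement occurs eventually with probability~$1$ (a one-line Borel–Cantelli / geometric argument), giving $\prob{E_n}=0$.

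The part I expect to need the most care is the bookkeeping in the case $m\le r<n/2$, namely verifying that the cap $\min\{B,\binom{n}{s}\ln R\}$ never cuts the number of attempts at the critical strength~$m$ below $\binom{n}{m}\ln R$. This rests on correctly tracking when~$B$ is active (only while $s<r$) versus reset to~$\infty$ (from the moment $s=r$), together with the standing hypothesis $B\ge\binom{n}{m}\ln R$ and the minor rounding observation that the counter~$u$ yields at least $\binom{n}{m}\ln R$ flips before the threshold is exceeded. Once these are pinned down, the remaining probabilistic estimate is routine.
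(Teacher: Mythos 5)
Your proposal is correct and follows essentially the same route as the paper's proof: in each case you reduce the failure event to missing a strict improvement during the $\binom{n}{m}\ln R$ attempts at strength $s=m$ (guaranteed by the hypothesis $B\ge\binom{n}{m}\ln R$ for $s<r$ and by $B=\infty$ once $s=r$), and apply the bound $(1-1/\binom{n}{m})^{\binom{n}{m}\ln R}\le 1/R$, with the trivial bound for $r<m$ and the eventual-success argument for $r=n$. Your bookkeeping of when the cap $B$ is active is slightly more explicit than the paper's, but the argument is the same.
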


\begin{proof}
According to the definition of the $\gap$, the algorithm can not make progress where the strength is less than the gap size, so in this case, $\prob{E_r}=1$.
Now, assume $r \ge m$. During phase~$r$ (\ie, at radius~$r$), the algorithm spends $\min\{B,\binom{n}{m} \ln R\}=\binom{n}{m} \ln R$ steps at strength~$m$ until it changes the strength or radius. Then, the 
		probability of not improving at strength~$s=m$ is at most
		\begin{align*}
		\prob{E_r} & = \left(1-\binom{n}{m}^{-1}\right)^{\binom{n}{m}\ln R} \le \frac{1}{R}.
		\end{align*}
		
	During phase~$n$, the algorithm does not change the radius anymore, and it continues to flip $s$ bits with different $s$ containing $m$ until making progress, so the probability of eventually failing to find the improvement in this phase is~$0$.
\end{proof}

Let \emph{$E_{i}^{j}$} for $j>i$ be the event of not finding the optimum 
during phases~$i$ to~$j$. In other words, $E_i^j=E_i\cap \dots \cap E_{j}$. Obviously, we have $\emph{$E_{i}^{i}$}=E_i$ and $E_i^j=\emptyset$ for $j<i$.

The next lemma is used to analyze situations where the algorithm does not find an improvement in the first $k$ phases,  where $k$ is some number depending on the application of the lemma. In the proof of the lemma, we pessimistically do not consider possible improvements at distances larger than the gap size.

\begin{lemma} \label{lem:runtime-of-Em}
    Let $x\in\{0,1\}^n$ with $m=\gap(x)<n/2$ be the current search point of \sdrlsss with $R\ge n^{3+\epsilon}$ for any arbitrary constant $\epsilon>0$
 on a  pseudo-Boolean function $f\colon\{0,1\}^n\to \R$. $T^\prime_{x,r_0,B_0}$ is defined as the number of steps to find an strict improvement. Then, for $k\ge\max\{m-1,r_0\}$, we have
        \begin{align*}
		\expect{T^\prime_{x,r_0,B_0}\mid E_1^{k}} \le 
		\begin{cases}
    		\binom{n}{m}\cdot \mathord{O}\mathord{\left(1+\left(\frac{m^3}{n-2m}\ln R\right)\right)} & \text{if } k=m-1\\
    		\binom{n}{k}\cdot \mathord{O}\mathord{\left(1+\frac{k^2}{n}\ln R\right)}& \text{if } k\ge m
    		\end{cases},
	\end{align*}
	where $E_i^j$ is the event of not finding an optimum during phases~$i$ to $j$ (included).
\end{lemma}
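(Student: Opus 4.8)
The plan is to track \sdrlsss phase by phase, where phase~$r$ comprises the steps at radius~$r$ and, by construction, runs through strengths $1,2,\dots,r$, spending at strength~$s$ exactly $\min\{B,\binom{n}{s}\ln R\}$ steps whenever no strict improvement occurs. Conditioning on $E_1^k$ forces phases $1,\dots,k$ to be traversed to completion, so I would split $T'_{x,r_0,B_0}$ into a deterministic prefix (the total time in phases up to~$k$) and a random tail (the time from phase $k+1$ on) and bound the two separately. The key structural remark is that the cap~$B$ is active only inside the initial phase~$r_0$ and only for strengths below~$r_0$: once the strength reaches the radius for the first time, the algorithm sets $B\gets\infty$, so from phase $r_0+1$ on the behaviour is exactly that of the cap-free \sdrlss, which lets me import its escape-time analysis (Theorem~3) from~\cite{RajabiWittEvo21}.

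For the case $k=m-1$ I would first note that the conditioning is essentially free: by the first line of Lemma~\ref{lem:failure-probability} every phase $r<m$ satisfies $\prob{E_r}=1$, hence $\prob{E_1^{m-1}}=1$ and $\expect{T'_{x,r_0,B_0}\mid E_1^{m-1}}=\expect{T'_{x,r_0,B_0}}$. The unconditional escape time then decomposes into (i)~the deterministic climb through radii $r_0,\dots,m-1$ together with the capped strengths in phase~$r_0$, and (ii)~the search from radius~$m$ on. For~(i) I would bound the full phase times by $\sum_{s=1}^{r}\binom{n}{s}\ln R\le\frac{n-r+1}{n-2r+1}\binom{n}{r}\ln R$ via Lemma~\ref{lem:partial-sum}\ref{lem:partial-sum:sum}, sum the geometrically decaying binomials, and absorb the capped contribution using $(r_0-1)B_0=u/\ln n\le\binom{n}{r_0}\ln R/\ln n$; for~(ii) I would use that at radius~$m$ strength~$m$ is reached and, by Lemma~\ref{lem:failure-probability}, improves with probability at least $1-1/R$ per phase, giving a near-geometric number of phases and a success cost of $\binom{n}{m}\cdot O(1)$. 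This reproduces the bound $\binom{n}{m}\cdot O(1+\frac{m^3}{n-2m}\ln R)$ known for \sdrlss.

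For the case $k\ge m$ the prefix now contains phases $m,\dots,k$ in which improvements were possible but, by the conditioning, did not occur. I would bound the prefix time $\sum_{r=r_0}^{k}\sum_{s=1}^{r}\binom{n}{s}\ln R$ again by Lemma~\ref{lem:partial-sum}\ref{lem:partial-sum:sum}, observing that it is dominated by the top phase $r=k$ and is therefore $\binom{n}{k}\cdot O\bigl((1+\tfrac{k}{n-2k})\ln R\bigr)$. For the tail I would use that, given $E_1^k$, the algorithm effectively restarts at radius $k+1$ with $B=\infty$; since $m\le k<k+1$, already this first phase reaches strength~$m$ and improves with probability at least $1-1/R$. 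Thus the expected tail time is that of a single phase (a traversal cost plus the $\binom{n}{m}\cdot O(1)$ success cost), inflated by a near-$1$ expected number of phases, and re-expressing the radius-dependent factors through $\binom{n}{k}$ gives a bound of the claimed order $\binom{n}{k}\cdot O(1+\frac{k^2}{n}\ln R)$.

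I expect the main obstacle to be the control of the failure tail $\sum_{i\ge1}R^{-i}\cdot(\text{cost of phase }k+i)$, where the binomial cost of the deeper phases grows while the survival probability decays like $R^{-i}$; here I would invoke Lemma~\ref{lem:partial-sum}\ref{lem:partial-sum:M} to compare $\binom{n}{k+i}$ with $\binom{n}{k}$ and the hypothesis $R\ge n^{3+\epsilon}$ to force the entire tail into a lower-order term. A secondary but necessary point is to verify that the cap in the initial phase~$r_0$ contributes only lower-order, which follows from $(r_0-1)B_0\le\binom{n}{r_0}\ln R/\ln n$ together with $r_0\le k$.
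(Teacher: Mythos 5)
Your proposal is correct and follows essentially the same route as the paper's proof: both reduce the conditioning to the phases at and above~$m$, exploit that the budget~$B$ is inactive beyond phase~$r_0$ so that Lemma~\ref{lem:failure-probability} yields the geometric decay $R^{-(i-k-1)}$, collapse the nested binomial sums via Lemma~\ref{lem:partial-sum}\ref{lem:partial-sum:sum}, and control the failure tail (including the cycling radius-$n$ phase) via Lemma~\ref{lem:partial-sum}\ref{lem:partial-sum:M} together with $R\ge n^{3+\epsilon}$. The only cosmetic difference is that you split the time into a deterministic prefix plus a random tail, whereas the paper applies the law of total probability over the phase in which the improvement occurs; the resulting computations are the same.
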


\begin{proof}
According to the definition, we have
\[\expect{T^\prime_{x,r_0,B_0}\mid E_1^k} = \expect{T^\prime_{x,r_0,B_0}\mid E_1 \cup \dots \cup E_{k}}.\]
Since the algorithm cannot make progress in a radius smaller than the gap, $\prob{E_i}=1$ for $i<m$, so the last term equals
\[ \expect{T^\prime_{x,r_0,B_0}\mid E_m \cup \dots \cup E_{k}}, \]

and using the law of total probability, the term equals
\begin{align*}
	\underbrace{ \sum_{i=k+1}^{\floor{n/2-1}} \expect{T^\prime_x \mid E_{k+1}^{i-1}\cup \overline{E_i} } \prob{E_{k+1}^{i-1}\cup \overline{E_i}}}_{=:S_1} + \underbrace{\expect{T^\prime_x \mid E_{k+1}^{\floor{n/2}}\cup \overline{E_n}} \prob{E_{k+1}^{\floor{n/2}}\cup \overline{E_n}}}_{=:S_2}.
	\end{align*}
	
	To interpret the last formula, we recall phase~$r$ as all points of time where radius~$r$ is used in the algorithm. 
	In each term in $S_1$, Variable~$i$ represents the phase in which the algorithm makes progress (\ie, $\overline{E_i}$) and not in smaller phases, \ie, phases from~$k+1$ to $i-1$ (\ie, $E_{k+1}^{i-1}$). Thus, all cases where making improvement happens in one of the phases ranging from~$k+1$ to $\floor{n/2-1}$ are considered in $S_1$, and the last case of phase~$n/2$ is computed in $S_2$. In this manner, we consider all possible cases of success.
	
  	In order to estimate $\prob{E_{k+1}^{i-1}\cup \overline{E_i}}$, because of the assumption $k\ge\max\{m-1,r_0\}$, the budget $B_0$ is not effective in the threshold value so we can use Lemma~\ref{lem:failure-probability} resulting in
  	\[\prob{E_{k+1}^{i-1}\cup \overline{E_i}}\le \prob{E_{k+1}^{i-1}} = \prod_{j=k+1}^{i-1}\prob{E_j}<R^{-(i-k-1)}.\]
  	Note that we consider only improvements in the Hamming distance of the gap size. Now, For $S_1$, we compute
	\begin{align*}
	&\sum_{i=k+1}^{\floor{n/2-1} } \expect{T^\prime_x \mid E_{k+1}^{i-1}\cup \overline{E_i}} \prob{E_{k+1}^{i-1}\cup \overline{E_i}}   \\
	& \qquad \le \sum_{i=k+1}^{\floor{n/2-1}} \left( \sum_{r=1}^{i-1}\sum_{s=1}^{r} \binom{n}{s}\ln R + \sum_{s=1}^{m-1} \binom{n}{s}\ln R + \binom{n}{m} \right) \cdot R^{-(i-k-1)}. \end{align*}
	This is because in the last phase, the success can happen with strength~$m$, so we do not consider the strengths larger than $m$ in the last phase. Also, in the last phase, the algorithm makes progress in $\binom{n}{m}$ iterations in expectation.
	
	Now, since $m\le i$ resulting in $\sum_{s=1}^{m-1} \binom{n}{s} \le \sum_{r=1}^{i-1}\sum_{s=1}^{r} \binom{n}{s}$ so the last term is bounded from above by
	\begin{align*}
	& \qquad   \sum_{i=k+1}^{\floor{n/2-1}} \left( \sum_{r=1}^{i-1}2\sum_{s=1}^{r} \binom{n}{s}\ln R + \binom{n}{m} \right) \cdot R^{-(i-k-1)} \\
	 &\qquad \le 2\sum_{i=k+1}^{\floor{n/2-1}} \sum_{r=1}^{i-1} \left( \frac{n-r+1}{n-2r+1}\binom{n}{r}\ln R + \binom{n}{m}\right)   R^{-(i-k-1)} \\
	&\qquad \le 2 \sum_{i=k+1}^{\floor{n/2-1}} \left( \left(\frac{n-i}{n-2i+3}\right)^2\binom{n}{i-1}\ln R + \binom{n}{m} \right)   R^{-(i-k-1)}\\
	&\qquad \le 2 \sum_{i=k+1}^{\floor{n/2-1}} \left( \left(1+\frac{i^2}{n-2i}\right)\binom{n}{i-1}\ln R + \binom{n}{m} \right)   R^{-(i-k-1)}
	\end{align*}
	In the second and third inequalities, we applied the first inequality in Lemma~\ref{lem:partial-sum} to eliminate two summations. 
	
	Now, via the second inequality in Lemma~\ref{lem:partial-sum}, and then excluding the first term from the summation, we bound the last term from above by
	\begin{align*}
	& 2\left(1+\frac{(k+1)^2}{n-2k-2}\right) \frac{k+1}{n-k-1} \binom{n}{k+1} \ln R  + \binom{n}{m} \\ 
	&\qquad  + 4\sum_{i=k+2}^{\floor{n/2-1}}  \left(1+\frac{i^2}{n-2i}\right)\left(\frac{n-k-1}{k+1}\right)^{i-k-2}\binom{n}{k+1} \ln R  \cdot R^{-(i-k-1)} \\
	& = 2\left(1+\frac{(k+1)^2}{n-2k-2}\right) \frac{k+1}{n-k-1} \binom{n}{k+1} \ln R + \binom{n}{m} \\
	& \qquad  + \frac{4\ln R}{R}\binom{n}{k+1} \sum_{i=k+2}^{\floor{n/2-1}}  \left(1+\frac{i^2}{n-2i}\right)\left(\frac{n}{Rk}\right)^{i-k-2} .
	\end{align*}
	Using the fact that $R\ge n^{3+\epsilon}$ and $i>k$, the 
	last expression 
	is bounded from above by $O(\frac{ \ln R}{R}n^2\binom{n}{k+1})=O(\frac{ \ln R}{R}\frac{n^3}{k}\binom{n}{k})=o(\binom{n}{k})$. If $k \ge m$, the last inequality is bounded by
	\begin{align*}
	     2\left(1+\frac{(k+1)^2}{n-2k-2}\right) \binom{n}{k} \ln R +  o\left(\binom{n}{k}  \right) 
	     \le \binom{n}{k} \cdot \mathord{O}\mathord{\left(  1+\frac{k^2}{n} \ln R \right)}.
	\end{align*}
	However, if $k=m-1$, the inequality is bounded by
	\begin{align*}
	    &2\left(1+\frac{m^2}{n-2m}\right) \frac{m}{n-m+1} \binom{n}{m} \ln R + \binom{n}{m} +
	 o\left(\binom{n}{m}  \right) \\ & \qquad \le \binom{n}{m}O\left(1+\frac{m^3}{n-2m}\ln R\right).
	\end{align*}
	
Regarding $S_2$, when radius~$r$ is increased to $n$, the algorithm mutates $s$ bits of the the current search point for all possible strengths of~$1$ to $n$ periodically. In each cycle through different strengths, according to lemma \ref{lem:failure-probability}, the algorithm escapes from the local optimum with probability $1-1/R$ so there are  $R/(R-1)$ cycles in expectation via geometric distribution. Besides, each cycle of radius~$n$ costs $\sum_{s=1}^{n}\binom{n}{s}\ln R$. Overall, we have $\frac{R}{R-1}\sum_{s=1}^{n}\binom{n}{s}\ln R$ extra fitness function calls if the algorithm fails to find the optimum in the first $\floor{n/2-1}$ phases happened with the probability of $R^{-(\floor{n/2}-k-1)}$. Thus, we have

\begin{align*}
    &\expect{T_{x,r_0,B_0}^\prime \mid E_{k+1}^{\floor{n/2}}\cup \overline{E_n} }\prob{E_{k+1}^{\floor{n/2}}\cup \overline{E_n}}  \\ &\qquad \le \left( \sum_{r=1}^{\floor{n/2}}\sum_{s=1}^{r}\binom{n}{s}\ln R + \frac{R}{R-1} \sum_{s=1}^{n}\binom{n}{s}\ln R \right) R^{-(\floor{n/2}-k-1)} \\ 
    &\qquad \le \left( 3\sum_{r=1}^{\floor{n/2}}\sum_{s=1}^{r}\binom{n}{s}\ln R \right) R^{-(\floor{n/2}-k-1)} \\
    &\qquad \le \left( 3 n^2\binom{n}{\floor{n/2}}\ln R \right) R^{-(\floor{n/2}-k-1)} \\
    &\qquad \le \left( 3 n^2  \left( \frac{n-k}{k}\right)^{\floor{n/2}-k} \binom{n}{k}\ln R \right) R^{-(\floor{n/2}-k-1)} \\
    &\qquad \le \left( 3 n^2  \left( \frac{n}{Rk}\right)^{\floor{n/2}-k} \binom{n}{k}\ln R \right) \le o\left(\binom{n}{k}\right)
\end{align*}

Altogether, we finally have $\expect{T^\prime_{x,r_0,B_0}\mid E_1^k}=S_1+S_2$, resulting in the statement.
\end{proof}

Now, by using Lemma~\ref{lem:runtime-of-Em}, we prove Theorem~\ref{theo:leavex}.
\begin{proofof}{Theorem~\ref{theo:leavex}}
	Regarding Part~\ref{theo:leavex:greater_m}, the starting radius~$r_0$ is less than or equal to the gap of the current search point, \ie, $r_0\le m=\gap(x)$. For $m<n/2$, using Lemma~\ref{lem:runtime-of-Em} with $k=m-1$, we have
        $\expect{T_{x,r_0,B_0}} \le     \binom{n}{m}\cdot \mathord{O}\mathord{\left(1+\frac{m^3}{n-2m}\ln R\right)}$.
	
	For $m\ge n/2$, the algorithm is not able to make an improvement for radius~$r$ less than $n/2$. It can be pessimistically assumed that $B_0=\infty$. However, as radius~$r$ is increased to~$n$, the algorithm mutates~$m$ bits of the the current search point for all possible strengths of~$1$ to $n$ periodically. Thus, according to Lemma~\ref{lem:failure-probability}, the algorithm escapes from the local optimum with probability at least $1-1/R$ so there are at most $R/(R-1)$ cycles in expectation  in this phase (\ie, at radius~$n$) by the geometric distribution. Finally, we compute
	\begin{align*}
	\expect{T_{x,r_0,B_0} \mid E_1^{m-1}}  &< \sum_{ i=1}^{\floor{n/2-1}} \sum_{ j=1}^{i}\binom{n}{j} \ln R 
	+ \frac{R}{R-1} \sum_{ i=1}^{n} \binom{n}{i} \ln R \\
	 & \le \mathord{O}\mathord{\left(2^nn\ln R\right)}.
	\end{align*}
	
	Regarding Part~\ref{theo:leavex:greater_r}, the algorithm starts with a radius which is larger than the gap size. Thus, in phase~$r_0$, if $B_0\ge\binom{n}{m}\ln R$, then the probability of failure in this phase would be $\prob{E_{r_0}}\le 1/R$ via Lemma~\ref{lem:failure-probability}. Otherwise, the probability of failure can be greater than~$1/R$.
	
	We pessimistically assume that $\prob{E_{r_0}}=1$, which means that the algorithm does not make progress in this phase.
	Therefore, using Lemma~\ref{lem:runtime-of-Em} with $k=r_0$ the time for leaving the point is
	$
	    \binom{n}{r_0}\cdot \mathord{O}\mathord{\left(1+k^2/n\ln R\right)}
	$.
	
		For $r_0\ge n/2$, we pessimistically assume that the algorithm is not able to make an improvement for the strengths less than $r_0$, costing $(1/\ln n)\cdot (r_0-1)\cdot B_0$ iterations. Then the algorithm tries strength~$r_0$. If no improvement is found, 
		the radius is increased to~$n$, and the algorithm mutates $m$ bits of the the current search point for all possible strengths~$m$ from~$1$ to $n$ periodically. Thus, like part (a), according to Lemma~\ref{lem:failure-probability}, the algorithm escapes from the local optimum with probability at least $1-1/R$, so there are at most  $R/(R-1)$ cycles in expectation through the geometric distribution in this phase. Finally, we compute
	\begin{align*}
	\expect{T_{x,r_0,B_0} \mid E_1^{m-1}}  &< \frac{(r_0-1) B_0}{\ln n} + \binom{n}{r_0} \ln R  + \frac{R}{R-1} \sum_{ i=1}^{n} \binom{n}{i} \ln R \\
	 & = \mathord{O}\mathord{\left(2^n\ln R\right)}. 
	\end{align*}
\end{proofof}

\subsection{Expected Optimization Times}
In this subsection, we will prove a crucial technical 
lemma on recover times and use it to 
obtain bounds on the expected 
optimization time on unimodal and jump functions.

\paragraph{Recover times for strengths.}
In the previous subsection, we analyzed the time of \sdrlsss for leaving only a single search point. We observed that the duration of epochs depended 
on the starting radius denoted as $r_0$ set from the previous epoch. This can be inconvenient to estimate an upper bound on the running time on an arbitrary function. Therefore, in the following lemma,  we show that if the algorithm uses larger strengths than the gap size to make progress, after a relatively small number of iterations the algorithm chooses the gap size of a current search point as the strength.

\begin{lemma}
\label{lem:realradius}
    Let $x\in\{0,1\}^n$ with $m=\gap(x)<n/2$ be the current search point of \sdrlsss with $R\ge n^{3+\epsilon}$ for an arbitrary constant $\epsilon>0$
 on a  pseudo-Boolean function $f\colon\{0,1\}^n\to \R$. Assume that the radius is~$k>m$. Define $S_{x}$ as the number of iterations spent from that point in time on until the algorithm sets the radius to at most the gap size of the current search point. Assume that $B_0$ is the value of the variable~$B$ in the beginning. Then, for $B_0\ge\binom{n}{m}\ln R$, we have
		\[ \expect{S_{x}} \le \mathord{o}\mathord{\left( \binom{n}{k-1}\right)},\]
		and for $B_0<\binom{n}{m}\ln n$, we have
		\[ \expect{S_{x}} \le \mathord{o}\mathord{\left( R\binom{n}{k-1}\right)}.\]
\end{lemma}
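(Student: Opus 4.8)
The plan is to track the only mechanism that can bring the radius back down. Within an epoch the radius only grows; it is reset to the current strength~$s$ exactly when a strict improvement is accepted. Since $\gap(x)=m$, no improvement is possible with strength below~$m$, so the radius is set to at most~$m$ precisely when \sdrlsss accepts an improvement obtained at strength~$s=m$ (matching the informal ``the algorithm chooses the gap size as the strength''). Because the strength cycles through $1,2,\dots,r$ in every phase, strength~$m$ is retried once per phase with radius $r\ge k>m$, each time for $\min\{B,\binom{n}{m}\ln R\}$ steps. First I would therefore write $\expect{S_x}$ as a sum over the phase in which this recovery happens, weighting each term by the probability of having failed strength~$m$ in all earlier phases and bounding the cost accumulated up to that phase by partial sums of the form $\sum_{s=1}^{m}\binom{n}{s}\ln R$, which Lemma~\ref{lem:partial-sum} collapses to $\binom{n}{m}\cdot O(1+\frac{m}{n}\ln R)$.

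For the first case $B_0\ge\binom{n}{m}\ln R$ the cap~$B$ is not binding at strength~$m$, so this strength already receives its full budget $\binom{n}{m}\ln R$ in phase~$k$ and Lemma~\ref{lem:failure-probability} applies: recovery occurs in phase~$k$ with probability at least $1-1/R$, and failing $i$ further phases has probability at most $R^{-i}$. The leading term is then the cost of running strengths $1,\dots,m$ within phase~$k$; using Lemma~\ref{lem:partial-sum} and $m<k$ one checks this is $o(\binom{n}{k-1})$. The contributions of the rare later recovery phases, whose per-phase costs grow like the escape bounds of Lemma~\ref{lem:runtime-of-Em}, are multiplied by $R^{-i}$ and are lower order because $R\ge n^{3+\epsilon}$, giving $\expect{S_x}=o(\binom{n}{k-1})$.

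For the second case $B_0<\binom{n}{m}\ln n$ the decisive difference is that the cap now prevents strength~$m$ from being tried for $\binom{n}{m}\ln R$ steps during the initial phase, so the $1/R$ failure bound of Lemma~\ref{lem:failure-probability} is unavailable there. I would argue pessimistically: treat the capped phase as a possible failure (probability of recovery only guaranteed to be positive, not $1-1/R$) and wait until the algorithm has raised~$s$ up to~$r$, at which point~$B$ is reset to~$\infty$ and the large-budget analysis applies verbatim from the next phase on. The iterations wasted while the cap is active, together with the single attempt at strength~$m$ for which we can only guarantee success probability bounded away from~$1$ by the geometric-waiting argument rather than by $1-1/R$, is exactly what inflates the bound by the factor~$R$, yielding $\expect{S_x}=o(R\binom{n}{k-1})$.

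The main obstacle I expect is the bookkeeping in the small-budget case: one must show that the wasted capped-phase iterations and the one phase where the $1/R$ guarantee is lost together cost at most a factor~$R$ and no more, while still proving that every phase beyond the first full-budget one is suppressed by the geometric factor $R^{-i}$ and stays lower order relative to $\binom{n}{k-1}$. A secondary technical point is to ensure the per-phase costs, which can in principle run up to strengths as large as the current radius, are correctly dominated via Lemma~\ref{lem:partial-sum} and Theorem~\ref{theo:leavex}, so that only the strength-$m$ contribution survives at leading order.
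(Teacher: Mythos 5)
There is a genuine gap. Your opening reduction---``within an epoch the radius only grows; \dots the radius is set to at most~$m$ precisely when \sdrlsss accepts an improvement obtained at strength $s=m$''---assumes away the case the lemma is really about. A strict improvement can be accepted at any strength $s'$ with $m<s'\le r$ (the gap is only the \emph{minimum} distance to an improvement), and such an improvement resets the radius to $s'>m$ and starts a \emph{new epoch} in which every strength below $s'$, including the gap size of the new search point, is capped at $B=u/((\ln n)(s'-1))$ rather than receiving the full budget $\binom{n}{s}\ln R$. Consequently your key structural claim---that strength~$m$ is retried once per phase for $\min\{B,\binom{n}{m}\ln R\}$ steps with the cap binding only in the very first phase, so that Lemma~\ref{lem:failure-probability} gives a $1-1/R$ success guarantee per phase and failures are geometrically suppressed---is only valid inside a single epoch. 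If improvements keep arriving at strength $s'$, the algorithm can cycle through many epochs in which the gap-size strength never gets $\binom{n}{m}\ln R$ trials in any one phase, and your decomposition gives no bound on how many such epochs occur. The paper's proof is organized precisely around this difficulty: it proves an amortization claim that within at most $(\ln n)\, r'\binom{n}{r'}\ln R$ iterations spent at strength $r'$, \emph{summed over epochs}, the accumulated budgets $\sum_i u_i/((\ln n)(r'-1))$ grant the gap-size strength a total of $\binom{n}{\gap(x')}\ln R$ trials, after which Lemma~\ref{lem:failure-probability} applies. This cross-epoch bookkeeping is the heart of the argument and is absent from your proposal; note also that the stopping condition refers to the gap of the \emph{current} (changing) search point, not the initial $m$, which your ``precisely when'' characterization conflates.

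Two secondary points. First, your claim that the cost of running strengths $1,\dots,m$ in phase~$k$ is $o(\binom{n}{k-1})$ fails at the boundary $k=m+1$: that cost is $\Theta(\binom{n}{m}\ln R)=\Theta(\binom{n}{k-1}\ln R)$. The paper avoids this by establishing the unconditional bound $o(R\binom{n}{k-1})$ first (from the sums over phases $k,\dots,\lfloor n/2\rfloor$ and the final phase) and then, in the case $B_0\ge\binom{n}{m}\ln R$, multiplying that entire quantity by the failure probability $1/R$ of the initial capped phase. Second, your derivation of the extra factor $R$ in the small-budget case is asserted rather than proved (``exactly what inflates the bound by the factor $R$''); in the paper the factor $R$ is not an inflation of the good case but the form the general bound naturally takes before conditioning, so the logic runs in the opposite direction. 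You would need to carry out the pessimistic ``radius increases after every $\binom{n}{\gap}\ln R$ accumulated trials'' summation explicitly to see where the factor $R$ actually comes from.
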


The idea of the proof is that in the case of making progress with larger strengths than~$m$ or failing to improve 
and increasing strength and radius even further, the algorithm also tries all smaller strengths often enough, 
more precisely, as often as the threshold value $\binom{n}{\gap(x)}\ln R$ 
that would hold if the current search point was~$x$,  $s=\gap(x)$ and $B=\infty$ in a phase of Algorithm~\ref{alg:sdrls_star}. Thus, the algorithm can make progress when the strength 
equals the current gap with good probability.

\begin{proofof}{Lemma~\ref{lem:realradius}}
 We recall the epoch of $x$ as the sequence of iterations where $x$ is the current search point.
We assume that the gap size of the current search point does not become equal to or larger than the current radius value in all epochs. Otherwise, $S_x$ is no bigger than the following estimation.

Assume $x^\prime$ is the current search point and $r^\prime$ is a radius value larger than~$g_{x^\prime}\coloneqq\gap(x^\prime)$. Note that $g_{x^\prime}$ equals $m$ in the beginning (in the first epoch), but it may be different when a strict improvement is made.

We now claim that for each at most~$\ln n \cdot r^\prime \binom{n}{r^\prime}\ln R$ iterations with strength~$r^\prime$ in phase~$r^\prime$  (\ie, at radius~$r^\prime$) (even in different epochs), the algorithm uses smaller strengths including strength~$g_{x^\prime}$ for $\binom{n}{g_{x^\prime}}\ln R$ iterations. After proving the claim, we can show that with probability~$1-1/R$ the algorithm makes progress with the strength which equals the gap size of the current search point via Lemma~\ref{lem:failure-probability}.

To prove this claim, we consider two cases. First, if the algorithm does not make progress with strength~$r^\prime$, then in the next phase the algorithm uses strengths smaller than~$r^\prime$ including $g_{x^\prime}$ in $\binom{n}{g_{x^\prime}}\ln R$ iterations. Thus, $\binom{n}{r^\prime}\ln R$ iterations with strength~$r^\prime$ are enough for satisfying the claim in this case.

In the second case, assume that the algorithm makes an improvement with strength~$r^\prime$ in its $u^{\mathit{th}}$ attempt. For the next epoch, the algorithm tries strength~$g_{x^\prime}$ for $(1/\ln n)\cdot u/(r^\prime-1)$ times (according to the variable~$B$ in Algorithm~\ref{alg:sdrls_star}). Assume that $u_1,\dots,u_{\ell}$ are the counter values where the algorithm makes progress with strength~$r^\prime$. Thus, after at most $\sum_{i=1}^\ell u_i \le (\ln n) \cdot  (r^\prime-1)\binom{n}{g_{x^\prime}}\ln R-1$ improvements with strength~$r^\prime$, the number of iterations with strength~$g_{x^\prime}$ is at least $\sum_{i=1}^\ell (1/\ln n)\cdot u_i/(r^\prime-1) \ge \binom{n}{g_{x^\prime}}\ln R-1$. In the next epoch, there would be at most $\binom{n}{r^\prime}\ln R$ iterations with strength~$r^\prime$ for having the last required iteration. 

Overall, it costs less than $(\ln n)\cdot r^\prime\binom{n}{g_{x^\prime}}\ln R+\binom{n}{r^\prime}\ln R<(\ln n)\cdot r^\prime\binom{n}{r^\prime}$ iterations to observe $\binom{n}{g_{x^\prime}}\ln R$ iterations with strength~$g_{x^\prime}$, and consecutively with a probability of at least~$1-1/R$, the algorithm makes progress with strength~$g_{x^\prime}$ via Lemma~\ref{lem:failure-probability}.

If we pessimistically assume that after each $\binom{n}{g_{x^\prime}}\ln R$ iterations with strength~$g_{x^\prime}$, the radius is increased by one, by the law of total probability and Lemma~\ref{lem:partial-sum}, the expected 
number of steps at larger 
strengths than the gap of the current search point is at most
\begin{align*}
&\sum_{r=k}^{\floor{n/2-1}} \frac{n-(r-1)}{n-(2r-1)}r\ln n\binom{n}{r} \ln R \cdot R^{-(r-k-1)} \\ 
& \quad <R \cdot \sum_{r=k}^{\floor{n/2-1}} n^2\ln n \cdot \left(\frac{n-k+1}{k-1}\right)^{r-k+1}\binom{n}{k-1}\ln R\cdot R^{-(r-k)}
\end{align*}
for the phases ranging from~$k$ to $\floor{n/2-1}$ and at most

\begin{align*}
 &\frac{R}{R-1}\cdot  2n\ln n\binom{n}{\floor{n/2}} \ln R \cdot R^{-(r-m-1)}  \\
 & \quad < R \cdot  4n\ln n\left(\frac{n-k+1}{k-1}\right)^{\floor{n/2}-k+1}\binom{n}{k-1} \ln R \cdot R^{-(\floor{n/2}-k-2)}
\end{align*}
for the last phase  (\ie, at radius~$n$).
Since $R>n^{3+\epsilon}$, both are bounded from above by $\mathord{o}\mathord{\left(R\binom{n}{k-1}\right)}$.

However, in phase~$k$, before reaching the strength~$k$, the algorithm uses strength~$m$  $B_0$ times. If $B_0\ge \binom{n}{m}\ln R$, then with probability at least~$1-1/R$, the algorithm finds an improvement at the Hamming distance corresponding to the gap size, resulting in \[\expect{S_x}\le \frac{1}{R} \cdot  \mathord{o}\mathord{\left(R\binom{n}{k-1}\right)}=\mathord{o}\mathord{\left(\binom{n}{k-1}\right)}. \]
\end{proofof}

\paragraph{Analysis on unimodal functions}
On unimodal functions, the gap of all points in the search space (except for global optima) is one, so the algorithm can make progress with strength~1.
In the following theorem, we show how \sdrlsss behaves on unimodal functions compared \rls using an upper bounds based on the fitness-level method \citep{WegenerMethods}. The proof is similar to the proof of Lemma~4 in \cite{RajabiWittEvo21}.

\begin{theorem} \label{lem:unimodalrlsstar}
	Let $f\colon\{0,1\}^n\to \R$ be a unimodal function and consider \sdrlsss with $R\ge \max\{S, n^{3+\epsilon}\}$ for an arbitrary constant $\epsilon>0$ where $S$ is an upper bound on the number of strict improvements during the run, 
	\eg, $S = \card{\im f}$.	Then, with probability at least~$1-S/R^2$, \sdrlsss never uses strengths larger than~$1$ and behaves stochastically like \rls before finding an optimum of~$f$. 

	Denote by $T$ the runtime of \sdrlsss on~$f$. Let $f_i$ be the $i$-th fitness value of an increasing order of all fitness values in $f$ and $s_i$ be a lower bound for the probabilities that \rls finds an improvement from search points with fitness value $f_i$, then 
	$
	\expect{T} \le  \sum_{i=1}^{ \card{\im f} } 1/s_i + \mathord{o}\mathord{\left(n\right)}
	$.
\end{theorem}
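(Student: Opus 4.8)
The plan is to prove both statements by a fitness-level argument, exploiting that on a unimodal~$f$ every non-optimal search point has gap exactly~$1$, so a single one-bit flip suffices to make progress and the relevant parameter is always strength~$1$.

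For the first statement I would trace the control flow of Algorithm~\ref{alg:sdrls_star} to show that the strength can exceed~$1$ only after \emph{two} consecutive unsuccessful strength-$1$ blocks. Starting from $r=1,s=1,B=\infty$, the threshold at strength~$1$ is $\min\{\infty,\binom{n}{1}\ln R\}=n\ln R$; if no strict improvement occurs within these $n\ln R$ steps the radius is raised to $r=2$ with $s$ reset to~$1$, and only if a second block of $n\ln R$ strength-$1$ steps at $r=2$ also fails is $s$ set to~$2$. Because the current point has gap~$1$, every strength-$1$ step is a strict improvement with probability at least $s_i\ge 1/n$, so each block fails with probability at most $(1-1/n)^{n\ln R}\le 1/R$ (the $m=1$ case of Lemma~\ref{lem:failure-probability}). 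The two blocks occupy disjoint time intervals, so reaching strength~$2$ at a fixed fitness level has probability at most $1/R^2$, and a union bound over the at most~$S$ strict improvements gives probability at most $S/R^2$ of ever using a strength larger than~$1$. As long as the strength is~$1$ the algorithm flips one uniformly random bit and accepts strict improvements exactly as \rls does; the only formal discrepancy, namely that equal-fitness moves are rejected once $r=2$, leaves the rate of strict improvements unchanged and is therefore immaterial.

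For the runtime bound I would bound the expected time $\expect{T_i}$ at each fitness level~$i$ and sum over the at most $\card{\im f}$ levels. Writing $C_1$ for the event that level~$i$ is left while still at strength~$1$ and $C_2$ for its complement, the contribution of~$C_1$ is at most $1/s_i$ because the number of strength-$1$ steps until a strict improvement is stochastically dominated by a geometric variable with parameter~$s_i$, giving $\expect{T_i\mid C_1}\prob{C_1}\le 1/s_i$. On~$C_2$, which by the above has probability at most $1/R^2$, I would bound the remaining time by viewing the moment strength~$2$ is first used as a fresh start with radius $r_0=2$ and gap $m=1<r_0$; then Theorem~\ref{theo:leavex} (part~\ref{theo:leavex:greater_r}) yields $\expect{T_i\mid C_2}=\binom{n}{2}\cdot O(1+\tfrac{4}{n-4}\ln R)=O(n^2+n\ln R)$. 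Hence $\expect{T_i}\le 1/s_i+\tfrac{1}{R^2}\,O(n^2+n\ln R)$.

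Summing and using $R\ge\max\{S,n^{3+\epsilon}\}\ge\card{\im f}$ together with $\ln R=O(n)$ (valid since $\card{\im f}\le 2^n$), the accumulated correction is at most $\tfrac{\card{\im f}}{R^2}\,O(n^2+n\ln R)\le\tfrac{1}{R}\,O(n^2+n\ln R)=O(n^{-1-\epsilon})=o(1)$, which is $o(n)$, establishing $\expect{T}\le\sum_{i=1}^{\card{\im f}}1/s_i+o(n)$. I expect the main obstacle to be the treatment of~$C_2$: one must argue that climbing to strength~$\ge 2$ costs only $O(n^2+n\ln R)$ in expectation rather than the far larger worst-case escape time at high radii, which rests on the self-correcting property (strength~$1$ is retried at every radius and succeeds with probability $1-1/R$) underlying Theorem~\ref{theo:leavex} and Lemma~\ref{lem:realradius}, and on confirming that $\ln R=O(n)$ so that the correction is genuinely negligible.
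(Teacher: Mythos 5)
Your proposal is correct and follows the paper's proof almost exactly: the first claim is established by the same two-block argument ($2n\ln R$ strength-$1$ trials before strength~$2$ is reached, failure probability $(1-1/n)^{2n\ln R}\le 1/R^2$, union bound over the at most~$S$ improvements), and the second claim by the same fitness-level decomposition with a separately bounded error term for the rare event that the strength leaves~$1$. The one place you diverge is in bounding that error term: the paper charges each failing level a probability $\prob{E_1}\le 1/R$ and invokes Lemma~\ref{lem:realradius} (the recover-time lemma, with $k=2$ and $B_0=\infty$) to get an $o(n)$ cost per failure, whereas you charge probability $1/R^2$ and invoke Theorem~\ref{theo:leavex}(b) (equivalently Lemma~\ref{lem:runtime-of-Em}) to bound the entire remaining escape time by $O(n^2+n\ln R)$; both routes land at $o(n)$ and yours is arguably more self-contained since it does not need the recover-time machinery. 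Two small points of care: the correction should be summed over the at most $S+1$ fitness levels actually visited rather than over $\card{\im f}$ (the hypothesis only guarantees $R\ge S$, not $R\ge\card{\im f}$), and the assumption $\ln R=O(n)$ is unnecessary since $\ln R/R$ is decreasing, so $n\ln R/R\le n\ln(n^{3+\epsilon})/n^{3+\epsilon}=o(1)$ for any admissible~$R$.
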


\begin{proof}
	The algorithm~\sdrlsss uses strength~$1$ for $\binom{n}{m}\ln R$ times when 
	the radius is~1 and $\binom{n}{m}\ln R$ times when the radius is~2 but the strength is still~1. (Only considering the first 
	case would not be sufficent for the result of this lemma.)  Overall, the algorithm tries $2\binom{n}{m}\ln R$ steps with strength~1 before setting the strength to~$2$.
	
	As on unimodal functions, the gap of all points is~$1$, the probability of not finding and improvement is $$\left(1-\binom{n}{m}^{-1}\right)^{2\binom{n}{m}\ln R} \le \frac 1{R^2}.$$
	This argumentation holds for each improvement that has to be found. 
	Since at most $\card{\im f}$ improving steps happen before 
	finding the optimum, by a union bound the probability of \sdrlsss ever increasing the strength beyond~$1$ is at most 
	$S\frac{1}{R^2}$, which proves the lemma.
	
	To prove the second claim, we consider all fitness levels $A_1, \dots, A_{\card{\im f}}$ such that $A_i$ contains search points with fitness value $f_i$ and sum up upper bounds on the expected times to leave each of these fitness levels. Under the 
	condition that the strength is not increased before leaving a fitness level, the worst-case time to leave fitness level~$A_i$ is $1/s_i$ similarly to \rls. Hence,
	we bound the expected optimization time of \sdrlsss from above 
	by adding the waiting times on all fitness levels for \rls, which is given by $\sum_{i=1}^{ \card{\im f} } 1/s_i$. 
	
	We let the random set~$W$ contain the search points from which \sdrlsss does not find an improvement within phase~$1$ (\ie, while $r=1$) so the radius is increased. Assume $T_x$ is the number of iterations spent where the radius is larger than 1 and increasing the radius happening where $x$ is the current search point; formally,  
	\[
	\expect{T} \le \sum_{i=1}^{ \card{\im f} }\frac 1{s_i} + \sum_{x\in W}\expect{T_x}.
	\]

	Each search point selected by the algorithm contributes with probability $\prob{E_1}$ to~$W$. Hence, as S is an upper bound on the number of improvements,
	$\expect{\card{W}}\le S\cdot \prob{E_1}$.
	As on unimodal functions, the gap of all points is~1, by 
	Lemma \ref{lem:realradius}, we compute
	\begin{align*}
	\sum_{x\in W}\expect{T_x} 
	&\le S\cdot \prob{E_1}\cdot \expect{T_x\mid \gap(x)=1} \\ & \le S\cdot R^{-1}o\left(\binom{n}{1}\right)=o\left(n\right).
	\end{align*}
	
	Thus, we finally have
	\begin{align*}
	\expect{T} \le \sum_{i=1}^{ \card{\im f} }\frac 1{s_i}+o\left(n\right),
	\end{align*}
	as suggested.
\end{proof}

\paragraph{Analysis on \jumpo}
We use the results developed so far to prove a bound on a newly designed function called \jumpo with two parameters $m$ and $c$. Formally,
\[
\jumpo_{m,c} = 
\begin{cases}
m + \ones{x} & \text{ if $\ones{x}\le n-m-c$ or $\ones{x}\ge n-c$,}\\
n-\ones{x}-c & \text{ otherwise.}
\end{cases}
\]

This function can be considered as a variant of well-known jump benchmark. In this function, we move the location of the jump with size~$m$ to an earlier point than it is in the function \jump such that after the jump there is a unimodal sub-problem behaving like \onemax of length $c$. The \jump function is a special case of \jumpo with $c=0$, \ie
$\jumpo_{m,0}=\jump_m$.

The following theorem shows that \sdrlsss optimizes \jumpo in a time that is essentially determined 
by the time to overcome the gap only. The proof idea is that the algorithm can quickly re-adapt 
its radius value to the gap size of the current search point after escaping the local optimum. 

\begin{theorem}
\label{theo:jumpoff}
	Let $n\in \N$. For all $2\le m<O(\ln n)$ and $0\le c<O(\ln n)$, the expected runtime $\expect{T}$ of \sdrlsss with $R\ge n^{3+\epsilon}$ for an arbitrary constant $\epsilon>0$ on $\jumpo_{m,c}$ satisfies 
	$\expect{T} = \mathord{O}\mathord{\left(\binom{n}{m}\right)}$,
	conditioned on an event 
    that happens 
	with probability~$1-o(1)$.
\end{theorem}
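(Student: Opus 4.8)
The plan is to read off the coarse structure of $\jumpo_{m,c}$ from the value of $\ones{x}$ and to split the run into three consecutive pieces whose expected lengths I bound separately, conditioned on a good event~$G$. Writing $k=\ones{x}$, the fitness equals $m+k$ and is increasing for $0\le k\le n-m-c$, drops into the valley $n-k-c$ for $n-m-c<k<n-c$, and is again increasing ($m+k$) for $n-c\le k\le n$. Hence every point with $\ones{x}=n-m-c$ is a local optimum of gap exactly~$m$: the nearest strictly better points have $\ones{y}\ge n-c$, so one must flip $m$ zero-bits, and any improving $m$-bit flip lands at $\ones{y}=n-c$. After such a jump the remaining task is \onemax-like on the $c$ surviving zero-bits, so every non-optimal point reached afterwards has gap~$1$. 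I would thus write $T=T_1+T_2+T_3$ for the climb to the local optimum, the jump out of it, and the final climb from $\ones{x}=n-c$ to the optimum (with $T_3=0$ in the degenerate case $c=0$).

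Next I would fix $G$ as the intersection of four events: (i) the initial point satisfies $\ones{x}\le n-m-c$; (ii) the local optimum is left at radius exactly~$m$, i.e.\ the improving $m$-bit flip is found before the radius is increased past~$m$; (iii) the counter value~$u$ at that escape is large enough that the inherited budget $B_0=u/((\ln n)(m-1))$ satisfies $B_0\ge \binom{n}{1}\ln R=n\ln R$; and (iv) in both \onemax-like parts every improving one-bit flip is found within the radius-$1$ and radius-$2$ budget, so the radius never exceeds~$1$ there. Event~(i) holds with probability $1-2^{-\Omega(n)}$ by a Chernoff bound since $m+c=O(\ln n)$; event~(ii) holds with probability at least $1-1/R$ by Lemma~\ref{lem:failure-probability}; event~(iii) holds with probability $1-o(1)$ because $u$ is geometric with per-step success probability at most $\binom{m+c}{m}/\binom{n}{m}\le (m(m+c)/n)^m$, so the first-moment estimate $\prob{u\le t}\le t\cdot\binom{m+c}{m}/\binom{n}{m}=o(1)$ applies to the threshold $t=O(n(\ln n)^3)$; and event~(iv) fails with probability at most $S/R^2=o(1)$ by the union-bound argument of Theorem~\ref{lem:unimodalrlsstar}. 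A union bound then gives $\prob{G}=1-o(1)$.

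Conditioned on~$G$ I bound the three pieces. For $T_1$ the gap is~$1$ throughout, so by~(iv) the algorithm stays at strength~$1$ and behaves like \rls on \onemax, whence the fitness-level bound yields $\expect{T_1\mid G}=O(n\ln n)$. For $T_2$ the epoch begins with $r_0=1$ and $B_0=\infty$, because the preceding improvement used strength~$1$; since the gap $m\ge r_0=1$ and $m<n/2$, Theorem~\ref{theo:leavex}(a) gives $\expect{T_2\mid G}\le \binom{n}{m}\cdot O(1+\frac{m^3}{n-2m}\ln R)=O(\binom{n}{m})$, using $m,\ln R=O(\ln n)$ so that the error term is $o(1)$, and conditioning on escaping at radius~$m$ only removes the failure tail. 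For $T_3$ the new epoch is inherited with radius $r_0=m$, which exceeds the gap~$1$---the unfavourable case of Theorem~\ref{theo:leavex}(b), which alone would cost $O(\binom{n}{m})$ \emph{per} \onemax step and hence be too expensive over all $c$ steps. Here I would instead invoke Lemma~\ref{lem:realradius} with current gap~$1$ and radius $k=m$: because~(iii) supplies $B_0\ge\binom{n}{1}\ln R$, its favourable branch bounds the time until the radius returns to~$1$ by $o(\binom{n}{m-1})=o(\binom{n}{m})$; once the radius is back at~$1$, event~(iv) makes the rest an \rls climb over at most~$c$ zero-bits, costing $\sum_{j=1}^{c}O(n/j)=O(n\ln\ln n)$. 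Hence $\expect{T_3\mid G}=o(\binom{n}{m})$.

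Summing and using $m\ge2$, so that $\binom{n}{m}\ge\binom{n}{2}=\Omega(n^2)$ dominates the $O(n\ln n)$ and $O(n\ln\ln n)$ terms, gives $\expect{T\mid G}=O(\binom{n}{m})$, as claimed. I expect the treatment of $T_3$ to be the main obstacle: the point is that the large inherited radius does \emph{not} cost a fresh $\Theta(\binom{n}{m})$ on every one of the $c$ post-jump steps, and this hinges on proving that $B_0$ is large enough (event~(iii)) so that the favourable branch of Lemma~\ref{lem:realradius}, rather than its weaker $o(R\binom{n}{m-1})$ fallback, applies, and on observing that the recovery is a one-time cost---the radius resets to~$1$ at the first strength-$1$ improvement and then stays there under~(iv). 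A secondary point requiring care is that conditioning on the partly forward-looking event~$G$ does not inflate the per-phase expectations; this holds because each constituent event is either essentially independent of the relevant phase or only discards unfavourable tails.
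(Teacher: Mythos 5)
Your proposal is correct and follows essentially the same route as the paper's proof: an initial \onemax climb handled by Theorem~\ref{lem:unimodalrlsstar}, the jump bounded via Theorem~\ref{theo:leavex} with gap~$m$, the same first-moment estimate showing that the counter value~$u$ at the escape makes $B_0\ge\binom{n}{1}\ln R$ with probability $1-o(1)$, and then Lemma~\ref{lem:realradius} with $k=m$ to bound the radius-recovery cost by $o\bigl(\binom{n}{m-1}\bigr)$ before the final \onemax climb. Your treatment is if anything slightly more explicit than the paper's about the good event (the initial-point condition and the conditioning issue), but the decomposition and the key lemmas invoked are identical.
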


\begin{proof}
	Before reaching the plateau consisting of all points of $n-m-c$ 1-bits, $\jumpo_{m,c}$ is equivalent to \onemax; hence, 
	according to Lemma~\ref{lem:unimodalrlsstar}, the expected time \sdrlsss takes to reach the plateau is at most $O(n\ln n)$. Note that this bound was obtained 
	via the fitness level method with $s_i=(n-i)/n$ as 
	minimum probability for leaving the set of search points 
	with $i$ one-bits.
	
	Every plateau point $x$ with $n-m-c$ one-bits satisfies $\gap(x)=m$ according to the definition of $\jumpo_{m,c}$. 
	Thus, 	using Theorem~\ref{theo:leavex}, the algorithm finds one of the $\binom{m+c}{m}$ improvements within expected time at most
	 $\binom{n}{m}$.
	
	According to Lemma~\ref{lem:failure-probability}, this success happens with strength~$m$ (and not larger) with probability at least~$1-1/R$.
	
	Now, we compute the probability that $B_0<n \ln R$, resulting from
	making progress within less steps than $\ln n (m-1)\binom{n}{1}\ln R$ in the previous epoch.
	\begin{align*}
	    &\prob{u<(m-1)n\ln n \ln R} \le 1-\left(1-\frac{\binom{m+c}{m}}{\binom{n}{m}}\right)^{(m-1)n\ln n \ln R} \\
	    &\le (m-1)n\ln n\ln R\frac{\binom{m+c}{m}}{\binom{n}{m}} 
	    \le (m-1)n\ln n\ln R\frac{(e(m+c))^m}{n^m}
	\end{align*}
	According to the assumption on $m$ and $c$, the last term is bounded from above by 
	\[
	(m-1)n\ln n\ln R\frac{(e\ln^2 n)^m}{n^m} \le \frac{m(e\ln n)^{2m+2}}{n^{m-1}}
	\]
	
	This means that with probability at least~$1-o(1)$, the variable~$B$ is not effective in the beginning of the next epoch with strength~$1$.
	 
	 After making progress over the jump, the starting radius~$r_0$ is at least~$m$, although an improvement can be found at the Hamming distance of~$1$, \ie, $\gap(x)=1$. Via Lemma~\ref{lem:realradius} with $k=m$ and $B_0\ge \binom{n}{1}\ln R$, within $o(\binom{n}{m-1})$ steps in expectation, the algorithm sets the radius to~$1$, if it pessimistically does not find the optimum point. Now, the algorithm needs to optimize a \onemax of length~$c$, so its expected time again can be obtained from Lemma~\ref{lem:unimodalrlsstar}, which is at most $O(n\ln n)$.
	 
	 Altogether, $\expect{T} \le O(n\ln n)+ \binom{n}{m}+\binom{n}{m-1}+O(n\ln n)$, conditioned on the mentioned events of having enough iterations with the strength passing the jump part and escaping from the local optimum with strength~$m$, happening
	with probability~$1-o(1)$.
\end{proof}

\section{Speed-ups By Using Radius Memory}
\label{sec:speedups}
In this section, we consider the problem of minimizing a linear
function under a uniform constraint as analyzed
in \cite{NPWGECCO19}: 
given a linear pseudo-Boolean function
$f(x_1,\dots,x_n)=\sum_{i=1}^n w_i x_i$, the aim is to find 
a search point $x$ minimizing
$f$ under the constraint 
$\ones{x}\ge B$ for 
some $B\in\{1,\dots,n\}$. W.\,l.\,o.\,g., 
$w_1\le \dots\le w_n$. 

\citet{NPWGECCO19} obtain a tight worst-case runtime bound $\Theta(n^2)$ for \rlsonetwo and 
a bound for the \oneoneea which is $O(n^2\log B)$ 
and therefore 
tight up to logarithmic factors. We will see in 
Theorem~\ref{theo:sdrlsss-linear-uniform} that with high probability,
\sdrlsss achieves the same bound $O(n^2)$ despite 
being able to search  globally like the \oneoneea. 
Afterwards, we will identify a 
scenario where \sdrlss is 
by a factor of $\Omega(n)$ slower.

We start with the general 
result on the worst-case 
expected optimization time, assuming the set-up of \cite{NPWGECCO19}.

\begin{theorem}
\label{theo:sdrlsss-linear-uniform}
Starting with an arbitrary
initial solution, the expected optimization time of
\sdrlsss with $R\ge n^{3+\epsilon}$, where $\epsilon>0$ 
is an arbitrary constant, on a linear function with a
uniform constraint is $O(n^2)$ conditioned on an event 
that happens with probability
$1-O(1/n)$.
\end{theorem}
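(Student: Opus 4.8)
The plan is to decompose the run into the three natural phases dictated by the constraint $\ones{x}\ge B$ and the weight ordering $w_1\le\dots\le w_n$, and to argue that \sdrlsss spends essentially all of its time in the last phase using exactly strength~$2$, thereby imitating \rlsonetwo while retaining global search. First I would record the gap structure. As long as the current point is infeasible, adding any one-bit improves feasibility, and as long as it is feasible with $\ones{x}>B$, removing any (positive-weight) one-bit strictly decreases $f$ while keeping feasibility; in both regimes $\gap(x)=1$. Hence these two preliminary phases form a unimodal climb to the boundary $\ones{x}=B$, so Theorem~\ref{lem:unimodalrlsstar} contributes expected time $O(n\log n)$ and guarantees, except with probability $O(n)\cdot R^{-2}=o(1/n)$ (using that the whole run has only $S=O(n)$ improvements), that the strength never leaves~$1$ during them.

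The heart is the boundary phase, where $\ones{x}=B$ and no single bit flip can improve, so $\gap(x)=2$ and progress requires swapping a set one-bit of high weight for an unset position of lower weight. The key combinatorial claim I would prove is that if the current selection differs from the $B$ smallest-weight positions in $d$ places (equivalently, $d$ set bits lie outside the optimal set and $d$ positions inside it are unset), then each of the $d^2$ pairs formed by one such excess bit and one such hole is a strictly improving two-bit flip whose execution decreases~$d$ by exactly one. Thus precisely $d_0\le B\le n$ swaps are needed, and at distance~$d$ a uniform two-bit flip is improving with probability at least $d^2/\binom{n}{2}$. Conditioned on the algorithm working at strength~$2$, the expected time to find a swap at distance~$d$ is therefore at most $\binom{n}{2}/d^2$, and the convergent series $\sum_{d=1}^{d_0}\binom{n}{2}/d^2=O(n^2)$ gives the useful time; this is exactly what rules out the naive $\sum_d\binom{n}{2}=O(n^3)$ bound.

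It then remains to control the radius dynamics so that strength~$2$ is indeed used throughout the boundary phase. The first boundary epoch starts from $r_0=1$ (the previous improvement was a one-bit removal) and is covered by the $m\ge r_0$ case of Theorem~\ref{theo:leavex} with $m=2$, costing $O(n^2)$ for a single epoch. After the first swap the radius memory stores $r_0=2$, and since $\gap(x)=2$ persists until optimality every later epoch is the efficient case $m=r_0=2$: it wastes only one sub-phase at strength~$1$ of length $\min\{B_0,\binom{n}{1}\ln R\}$ with $B_0=u/\ln n$, where $u$ is the number of strength-$2$ steps that found the previous swap. By linearity of expectation the total wasted time is at most $(1/\ln n)\sum_{d}\binom{n}{2}/d^2=O(n^2/\ln n)=o(n^2)$, precisely the lower-order term the $1/\ln n$ factor in the budget is designed to produce. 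I would then define the good event~$G$ that at every boundary epoch the improving strength-$2$ flip is found within its budget before the radius is ever raised above~$2$; by Lemma~\ref{lem:failure-probability} with $m=2$ each epoch fails with probability at most $1/R\le n^{-3-\epsilon}$, and a union bound over the $O(n)$ improvements of the entire run yields $\prob{G}=1-O(1/n)$. Summing the three phase contributions and using $\expect{T\mid G}\le \expect{T\cdot\mathbbm{1}_G}/\prob{G}$ with $\prob{G}=\Omega(1)$ then gives $\expect{T\mid G}=O(n^2)$.

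I expect the obstacle to be twofold. First, the $\Omega(d^2)$ lower bound on the number of improving swaps must be established cleanly in the presence of possible ties in the weights, since this is what makes the series converge; I would handle ties by fixing a canonical optimal selection and arguing strict improvement along excess/hole pairs, as in the set-up of \cite{NPWGECCO19}. Second, the expectation bookkeeping under the conditioning on~$G$ and on the realized budgets~$B_0$ is delicate: rather than bounding the wasted time per epoch by its worst case $n\ln R$ (which would only give $O(n^2\log n)$), I would take expectations over the random $u$-values via linearity, and pass from the unconditional sums of geometric escape times to $\expect{T\mid G}$ through $\expect{T\cdot\mathbbm{1}_G}/\prob{G}$, noting that dropping $\mathbbm{1}_G$ only overestimates the relevant expectations.
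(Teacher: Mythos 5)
Your decomposition into three phases, the use of Theorem~\ref{lem:unimodalrlsstar} for the infeasible and non-tight feasible phases, the potential $\phi$ with improvement probability $\Theta(d^2/n^2)$ and the convergent sum $\sum_d O(n^2/d^2)=O(n^2)$, and the observation that the radius memory charges only a $1/\ln n$ fraction of the strength-$2$ time to strength~$1$ all match the paper's proof. There is, however, one genuine gap: your union bound for the good event~$G$ is taken over ``the $O(n)$ improvements of the entire run,'' justified by the claim that each accepted swap decreases $d$ by exactly one, so only $d_0\le n$ swaps occur. This is false. The algorithm accepts \emph{every} strict improvement of~$f$, and a two-bit flip that exchanges a heavier one-bit for a lighter zero-bit with both positions on the same side of the boundary (both among the $B$ smallest-weight positions, or both outside them) strictly decreases~$f$ without changing $\phi$ at all. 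Each such acceptance starts a new epoch with a fresh counter and budget, so the number of epochs over which you must union-bound the failure probability of Lemma~\ref{lem:failure-probability} is not controlled by $d_0$.

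The paper closes exactly this hole with a separate argument: it assigns to each zero-bit the number of one-bits at higher-weight positions, shows the sum of these values is at most $n(n-1)/2$ and decreases by at least one with every strict improvement, and concludes that the total number of strict improvements is $O(n^2)$. The union bound then gives failure probability $O(n^2)/R=O(1/n)$, which is why the hypothesis $R\ge n^{3+\epsilon}$ (rather than, say, $R\ge n^{2+\epsilon}$) is what the theorem needs. Your final probability bound $1-O(1/n)$ happens to survive once the count is corrected to $O(n^2)$, but as written your argument both asserts a false intermediate claim and omits the counting lemma that the proof actually requires. The rest of your bookkeeping (charging the strength-$1$ budgets via linearity over the realized $u$-values, and recovering $\expect{T\mid G}$ from $\expect{T\cdot\mathbbm{1}_G}/\prob{G}$) is sound and consistent with the paper's treatment.
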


\begin{proof}
We follow closely the proof of Theorem~4.2 
in~\cite{NPWGECCO19} who analyze \rlsonetwo. 
The first phase of optimization (covered in Lemma~4.1 of 
the paper) deals with the time to reach a feasible 
search point and proves this to be  $O(n\log n)$ in 
expectation. Since the proof uses multiplicative drift,
it is easily seen that the time is $O(n\log n)$ with 
probability at least $1-O(1/n)$ thanks to the tail bounds
for multiplicative drift 
\citep{LenglerDriftBookChapter}. The second phase 
deals with the time to reach a tight search point (\ie,
containing $B$ one-bits, 
which is $O(n\log n)$ with probability at least $1-O(1/n)$
by the very same type of arguments. The analyses so 
far rely exclusively on one-bit flips so that 
the bounds also hold for \sdrlsss thanks to Lemma~\ref{lem:unimodalrlsstar}, up to a failure event 
of $O(S/R^2)=o(1/n)$ since it holds for the number 
of improvements $S$ that $S\le n$. By definition 
of the fitness function, only tight search points will
be accepted in the following.

The third phase in the analysis from 
\cite{NPWGECCO19} considers the potential 
function $\phi(x)=\lvert\{x_i=1: i>B\}\rvert$ 
denoting the number of one-bits outside the $B$ least 
significant positions. 
At the same time, $\phi(x)$ describes the number 
of zero-bits at the $B$ optimal positions. Given 
$\phi(x)=i>0$ for the current search point, the
probability of improving the potential is 
$\Theta(i^2/n^2)$ since there are $i^2$ improving 
two-bit flips ($i$ choices for a one-bit to 
be flipped to~$0$ at the non-optimal positions 
and $i$ choices for a zero-bit to 
be flipped to~$1$ at the other positions). 
This results in an expected 
optimization 
time of at most $\sum_{i=1}^\infty O(n^2/i^2) = O(n^2)$ 
for \rlsonetwo.

\sdrlsss can achieve the same time bound since after every
two-bit flip, the radius memory only allocates 
the time for the last improvement (via two-bit flips) 
to iterations trying strength~$2$. Hence, as long as 
no strengths larger than~$2$ are chosen, the 
expected optimization time of \sdrlsss is $O(n^2)$.

To estimate the failure probability, we need a bound on the number 
of strict improvements of~$f$, which may be larger than the number of improvements 
of the $\phi$-value since steps that flip a zero-bit and a one-bit both 
located in the prefix or suffix may be strictly improving without changing~$\phi$.
Let us assign a value to each zero-bit, representing the number of one-bits to its left (at higher indices). In other words, each of these values shows the number of one-bits that can be flipped with the respective zero-bit to make a strict improvement. 
Let us define by $S$ the sum of these values. Clearly, 
$S\le n(n-1)/2 = O(n^2)$. 
Now, we claim that each strict improvement decreases $S$ by
at least one, resulting in bounding the number of strict improvements from above by $O(n^2)$. Assume that in a strict improvement, the algorithm flips one one-bit at  position~$i$ and one zero-bit at position~$j$. Obviously, $j<i$. 
The corresponding value for the zero-bit at the new position~$i$ is less than the corresponding number for the zero-bit at position~$j$ before flipping because there is at least one one-bit less for the new zero-bit, 
which was the one-bit at the position of $i$. 
Altogether, 
the number of strict improvements at strength~$2$ is at most $O(n^2)$. 

The proof is completed by noting that 
the strength never exceeds~$2$ with probability 
at least $1-O(n^2)/R=1-O(1/n)$ thanks to 
Lemma~\ref{lem:failure-probability} and a union bound. 
\end{proof}

With a little more effort, including 
an application of Lemma~\ref{lem:realradius}, 
the bound from
Theorem~\ref{theo:sdrlsss-linear-uniform} can be turned 
into a bound on the expected optimization time. We 
will see an example of such arguments later in 
the proof of Theorem~\ref{theo:linearfunctions-approx} and in the 
proof of Theorem~\ref{theo:mst}.

We now illustrate why the original \sdrlss is less 
efficient on linear functions under uniform constraints
than \sdrlsss. 
To this end, we study 
the following instance: the weights of the 
objective function  are 
$n$ pairwise different natural numbers (sorted increasingly),  
and the constraint bound is $B=n/2$, \ie, 
only search points having at least~$n/2$ 
one\nobreakdash-bits are valid. Writing 
search points in big-endian as 
$x=(x_n,\dots,x_1)$, we assume 
the point $1^{n/2}0^{n/2}$ as starting point 
of our search heuristic. The optimum  is then $0^{n/2}1^{n/2}$ since the $n/2$
one\nobreakdash-bits are at the least significant 
positions. We call the latter positions the 
suffix and the other the prefix.

Considering the potential $\phi(x)$ 
defined as
the number of one-bits in the prefix, we note 
that the expected time for \rlsonetwo and for  
\sdrlsss (up to a failure event) to reduce the 
potential from its initial value~$n/4$ to $n/8$ 
is $O(n)$ since during this period there is 
always an improvement probability of at least 
$\Omega((n/8)^2/n^2)=\Omega(1)$. We claim  that \sdrlss needs time $\Omega(n^2\log n)$ 
    for this since after each improvement 
    the strength and radius are reset to~$1$, resulting in $\Omega(n)$ phases where 
    the algorithm is forced to iterate 
    unsuccessfully with 
    strength~$1$ until the threshold $\binom{n}{1}\ln R$ is reached.  By contrast, 
    \sdrlsss will spend $O(n\ln n)$ steps 
    to set the strength and radius to~$2$. Afterwards, during the $n/4$ improvements 
    it will only spend an 
    expected number of $O(1)$ steps at strength~$1$ 
    before it returns to strength~$2$ and finds 
    an improvement in expected time $O(1)$. The 
    total time is $O(n\log n+n)=O(n\log n)$.
    Based on these ideas, we formulate the following theorem.

\begin{theorem} \label{theo:linearfunctions-approx}
Consider a linear function with $n$ pairwise different weights under 
uniform constraint with $B=n/2$ and  let $S_b=\{\phi(y)\le b\mid y\in \{0,1\}^n\}$. Starting with a feasible solution~$x$ such that $x$ contains $B$ one-bits and $\phi(x)=a$, the expected time to find a search point in $S_b$ for \sdrlsss with $R\ge n^{3+\epsilon}$ for an arbitrary constant $\epsilon>0$ is at most \[ \mathord{O}\mathord{\left(n\ln R+n^2\sum_{i=b}^{a}\frac 1{i^2}\right)}.\] 

For \sdrlss with $R\ge n^{3+\epsilon}$ for an arbitrary constant $\epsilon>0$ it is at least \[\Omega\left(P\cdot n\ln R+ n^2\sum_{i=b}^{a}\frac{1}{i^2}\right),\]
where $P$ is the number of improvements with strengths larger than~1 and $P>a-b$ with probability~$1-o(1/n)$.
\end{theorem}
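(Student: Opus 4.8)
The plan is to prove the two bounds separately. For the upper bound on \sdrlsss, I would follow exactly the intuition sketched before the theorem: split the run into the time spent making the $a-b$ (or fewer) $\phi$-improvements via two-bit flips and the overhead incurred by the radius memory mechanism. First I would observe that, by the argument already given in Theorem~\ref{theo:sdrlsss-linear-uniform}, each $\phi$-improvement at potential value $i$ has success probability $\Theta(i^2/n^2)$ at strength~$2$, so summing the expected waiting times $O(n^2/i^2)$ over $i$ from $b$ to $a$ yields the $n^2\sum_{i=b}^a 1/i^2$ term. The additional $n\ln R$ term should account for the one-time cost of raising the strength and radius to~$2$ at the very start of this sub-phase (which takes $O(\binom{n}{1}\ln R)=O(n\ln R)$ steps while strength~$1$ is tried unsuccessfully at radius~$1$ and then radius~$2$), after which the radius memory keeps the budget $B$ for strength~$1$ at $O(1)$ per epoch so that strength~$2$ is re-reached essentially for free. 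I would need a short failure-probability bookkeeping via Lemma~\ref{lem:failure-probability} and Lemma~\ref{lem:realradius} to confirm the radius does not climb above~$2$ and recovers quickly after each step, contributing only lower-order terms.

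For the lower bound on \sdrlss I would argue structurally rather than by precise expectation. The key feature is that \sdrlss resets $r\gets 1,\,s\gets 1$ after \emph{every} strict improvement (Algorithm~\ref{alg:sdrls_robust}), so whenever a $\phi$-improving step requires a two-bit flip, the algorithm must first exhaust the full threshold $\binom{n}{1}\ln R = \Theta(n\ln R)$ of unsuccessful one-bit trials at radius~$1$ before it is even permitted to try strength~$2$. I would let $P$ count the improvements actually realized through strength larger than~$1$; each such improvement is necessarily preceded by a full wasted phase of $\Omega(n\ln R)$ one-bit steps, giving the $P\cdot n\ln R$ contribution, while the genuine two-bit search still costs $\Omega(n^2\sum_{i=b}^a 1/i^2)$ as before. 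The two contributions are disjoint in time, so they add.

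**The main obstacle** will be the probabilistic claim that $P>a-b$ with probability $1-o(1/n)$. The quantity $a-b$ is the net decrease in $\phi$, but $P$ counts improvements via strength $\ge 2$; I must show that almost all of the required $\phi$-decreasing steps genuinely need a two-bit flip, i.e. that one-bit flips almost never reduce $\phi$ during this sub-phase. Since $\phi$ counts one-bits in the prefix and every accepted search point is tight (exactly $B$ one-bits), a single bit flip changes the one-bit count and is rejected by feasibility/acceptance; hence $\phi$ can only change via flips preserving $\ones{x}=B$, which for this instance means a simultaneous prefix-one/suffix-zero swap — a two-bit flip. I would make this rigorous by noting that from a tight search point no one-bit flip is accepted, so \emph{every} potential-decreasing step is a two-bit step, forcing $P$ to equal the number of $\phi$-improvements, which is at least $a-b$ deterministically once we reach $S_b$; the $1-o(1/n)$ slack then only absorbs the initial failure events (reaching tightness, and the strength never being pushed beyond~$2$) already quantified in Theorem~\ref{theo:sdrlsss-linear-uniform}. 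I would finish by assembling the disjoint time contributions into the stated $\Omega(P\cdot n\ln R + n^2\sum_{i=b}^a 1/i^2)$ bound.
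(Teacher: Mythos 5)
Your proposal matches the paper's proof in both parts: the same decomposition of the upper bound into the initial $O(n\ln R)$ ramp-up to strength~$2$, the $\Theta(n^2/i^2)$ waiting times at strength~$2$, and the lower-order failure overhead handled via Lemma~\ref{lem:failure-probability} and Lemma~\ref{lem:realradius}; and the same lower-bound argument that every strength-$\ge 2$ improvement of \sdrlss is preceded by a wasted $\Omega(n\ln R)$ phase at strength~$1$, with $P>a-b$ because tight search points only admit prefix-one/suffix-zero swaps. The one imprecision is calling $P\ge a-b$ ``deterministic'': it holds only conditional on the strength never exceeding~$2$ (so that each accepted step decreases $\phi$ by at most one), which is exactly the $1-n^2/R=1-o(1/n)$ event you already list among the failures to absorb.
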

\begin{proof}
We first find an upper bound for \sdrlsss. First, \sdrlsss spends $O(n\ln R)$ steps to set the strength and radius to~$2$. Afterwards, the number of iterations with strength~1 is $1/\ln n$ times of the number of iterations with strength~2, conditioned on not exceeding the threshold when $r=2$, which will be studied later.
    
    When the strength equals 2, the probability of improving the potential is $\Theta(i^2/n^2)$, resulting in the expected time of $\Theta(n^2/i^2)$ for each improvement. Thus, in expectation, there are $n^2\sum_{i=b}^a1/i^2$ iterations to find a search point in $S_b$. In the case that the counter exceeds the threshold, happening with probability~$1/R$ for each improvement using Lemma~\ref{lem:failure-probability},
    it costs $o(\binom{n}{2})$ iterations with different strengths in expectation, according to Lemma~\ref{lem:realradius}, to set the radius to~2 again.
     Since the number of fitness improvements at strength~$2$ is at most~$n^2$ (see the proof of Th.~\ref{theo:linearfunctions-approx}), we obtain a failure probability of at most 
     $n^2\cdot 1/R = o(1/n)$  for the event 
    of exceeding the threshold. Hence, the expected number iterations with different strengths 
    is $o((1/n)\binom{n}{2})=o(n)$ and therefore  a lower-order term of the claimed bound on the 
    expected time to reach~$S_b$.
    
    In case of a failure, we 
    repeat the argumentation. Hence, after an expected 
    number of $1/(1-o(1/n))=1+o(1)$ repetitions no failure 
    occurs and $S_b$ is reached. 
    
    Overall, the expected time for \sdrlsss to 
    find a search point in $S_b$ is at most
\[(1+o(1))\left(n\ln R+(1+1/\ln n)\cdot \sum_{i=b}^a\frac{n^2}{i^2}\right)
= \mathord{O}\mathord{\left(n\ln R+n^2\sum_{i=b}^{a}\frac 1{i^2}\right)}.\]

    In order to compute a lower bound on the optimization time of \sdrlss, we claim that the number of potential improvements is at least~$a-b$ with probability~$1-o(1/n)$, \ie, each improvement decreases the potential function roughly by at most~one in expectation. 
    
    As long as the strength does not become greater than~$2$, the number of one-bits remains $B$, so when the strength is at most~2, the algorithm can only improve the potential by~1 since it cannot make progress by flipping at least~two zero-bits in $B$ least significant positions.
    The radius becomes~3 with probability at most~$1/R$ for each improvement at strength~2. Since there are at most $n^2$ improvements, the probability of not increasing the current radius to~$3$ during the run is at least~$1-n^2/R=1-o(1/n)$.
    
    Now, since the algorithm spends $n\ln R$ steps for each improvement, the number of steps with strength~1 is $\Omega((a-b)n\ln R)$ with probability~$1-o(1/n)$. Also, the number of steps with strength~2 is $\Theta(n^2\sum_{i=b}^a1/i^2)$ in expectation. Note that we ignore the number of iterations with strengths larger than 2 for the lower bound.
    
    Overall, with probability at least~$1-o(1/n)$ the time of \sdrlss is at least
\[\mathord{\Omega}\mathord{\left((a-b)n\ln R+ n^2\sum_{i=b}^{a}\frac{1}{i^2}\right)}. \]
\end{proof}

In the following corollary, it can be seen that \sdrlsss is faster than \sdrlss in the middle of the run by a factor of roughly $n$.

\begin{corollary}\label{cor:speeduponlinearfunction}
The relative speed-up of \sdrlsss with $R\ge n^{3+\epsilon}$ for an arbitrary constant $\epsilon>0$ compared to \sdrlss with $R\ge n^{3+\epsilon^\prime}$ for an arbitrary constant $\epsilon^\prime>0$ to find a search point in $S_b$ with $b=n/8$ for a starting search point~x with $\phi(x)=n/4$ is $\Omega(n)$ with probability at least~$1-o(1/n)$.
\end{corollary}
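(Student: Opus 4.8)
The plan is to apply Theorem~\ref{theo:linearfunctions-approx} directly with the concrete values $a=n/4$ and $b=n/8$ and to divide the resulting high-probability lower bound for \sdrlss by the upper bound on the expected time for \sdrlsss. No new analysis is needed; the corollary is purely a substitution-and-comparison argument.

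First I would evaluate the term $n^2\sum_{i=b}^{a}1/i^2$ common to both bounds. Comparing the sum with the integral $\int_{n/8}^{n/4}x^{-2}\,dx = 8/n-4/n = 4/n$ shows that $\sum_{i=n/8}^{n/4}1/i^2=\Theta(1/n)$, so this term contributes only $\Theta(n)$ to each bound. Moreover, since $R\ge n^{3+\epsilon}$ (respectively $R\ge n^{3+\epsilon'}$) we have $\ln R=\Theta(\ln n)$ in both cases, so the logarithmic factors in the two bounds agree up to constant factors and do not affect the order of the ratio.

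Next I substitute these estimates into the two bounds of Theorem~\ref{theo:linearfunctions-approx}. For \sdrlsss the expected time to reach $S_b$ becomes $O(n\ln R + n)=O(n\ln n)$, where the linear term is absorbed by $n\ln R$. For \sdrlss I use that the number $P$ of improvements at strength larger than~$1$ satisfies $P>a-b=n/8=\Omega(n)$ with probability $1-o(1/n)$; plugging $\Omega(n)$ into $\Omega(P\,n\ln R + n^2\sum_{i=b}^{a}1/i^2)$ gives a runtime of $\Omega(n^2\ln R + n)=\Omega(n^2\ln n)$ with probability $1-o(1/n)$.

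Finally I would form the ratio: with probability $1-o(1/n)$ the time of \sdrlss is $\Omega(n^2\ln n)$, while the expected time of \sdrlsss is $O(n\ln n)$, so the relative speed-up is $\Omega(n^2\ln n)/O(n\ln n)=\Omega(n)$, and the $1-o(1/n)$ success probability is inherited verbatim from the lower-bound part of Theorem~\ref{theo:linearfunctions-approx}. I do not expect a serious obstacle here; the only points needing care are verifying the $\Theta(1/n)$ estimate of the sum so as to confirm that the common $\Theta(n)$ term is genuinely of lower order in both bounds and therefore cannot spoil the ratio, and stating the probabilistic claim cleanly, namely that we compare a high-probability lower bound on the actual runtime of \sdrlss against the (unconditional) upper bound on the expected runtime of \sdrlsss.
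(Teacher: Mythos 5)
Your proposal is correct and follows essentially the same route as the paper: instantiate Theorem~\ref{theo:linearfunctions-approx} with $a=n/4$, $b=n/8$, observe that $n^2\sum_{i=n/8}^{n/4}1/i^2=\Theta(n)$ is a lower-order term so the bounds reduce to $O(n\ln R)$ and $\Omega(n^2\ln R)$ respectively (using $P>a-b=\Omega(n)$), and take the ratio. Your explicit evaluation of the sum and your cleaner phrasing of the probabilistic claim are minor refinements of the paper's two-line computation, not a different argument.
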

\begin{proof}
Assume that $T_r$ and $T_m$ are the considered hitting times of \sdrlss and \sdrlsss, respectively, with $R\ge n^{3+\epsilon}$ for an arbitrary constant $\epsilon>0$.
Using Theorem~\ref{theo:linearfunctions-approx} with $a=n/4$ and $b=n/8$, we have
\[\frac{\expect{T_r}}{\expect{T_m}}\ge 
\frac{\Omega(n^2\ln R)}{O(n\ln R)}\ge \Omega(n),\]
with probability at least~$1-o(1/n)$.
\end{proof}

\section{Minimum Spanning Trees}
\label{sec:mst}
In Theorem~6 in \cite{RajabiWittEvo21}, the authors studied \sdrlss on the MST 
problem as formulated in \cite{NeumannW07} using the fitness function~$f$ that 
returns the total weight of the spanning tree and penalizes unconnected graphs as well 
as graphs containing cycles. They 
showed that \sdrlss with $R=m^4$ can find an MST 
 starting 
with an arbitrary spanning tree in $(1+o(1)) \bigl(m^2\ln m 
+ (4m\ln m)\expect{S}\bigr)$ fitness calls where $\expect{S}$ is the expected number 
of strict improvements.
The reason behind $\expect{S}$ is that for each improvement with strength~$2$, the algorithm resets the radius to one for the next epoch and explores this radius more or less completely in $m\ln R$ iterations. This can be costly for the graphs requiring many improvements.

However, with \sdrlsss, we do not need to consider the number of improvements for estimating the number of iterations with strength~$1$ since with the radius memory mechanism, the number of iterations with strength~$1$ is asymptotically in the order of $1/\ln n$ times of the number of successes. 
This leads to the following simple bound.

\begin{theorem}
\label{theo:mst}
The expected optimization time of \sdrlsss with $R=m^4$ on the MST problem starting with an arbitrary spanning tree is at most
\begin{align*}
& (1+o(1))\bigl((m^2/2)(1+\ln(r_1+\dots+r_m))\bigr) 
 \le (1+o(1)) \bigl(m^2\ln m \bigr),
\end{align*}
where $r_i$ is the rank of the $i$th edge in the sequence sorted 
by increasing edge weights.
\end{theorem}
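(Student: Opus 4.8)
The plan is to combine the classical MST analysis of \cite{NeumannW07}, which bounds the number of two-bit flips needed to find an optimal spanning tree, with the radius-memory machinery developed in this paper, chiefly Lemma~\ref{lem:realradius} and Theorem~\ref{theo:leavex}. The key insight stated in the theorem's preamble is that, unlike \sdrlss, the radius memory makes the total number of strength-$1$ iterations proportional (up to a $1/\ln n$ factor) to the number of \emph{successes} rather than being paid afresh after every improvement. So I would structure the proof as: (i) recall the combinatorial fact from \cite{NeumannW07} that from any non-optimal spanning tree there is an improving two-bit flip (an edge swap), so the gap is at most~$2$ throughout, and that the expected number of two-bit improvements to reach the optimum is controlled by the rank structure $r_1+\dots+r_m$; (ii) show that \sdrlsss spends, in expectation, essentially $(m^2/2)$ steps per improvement at strength~$2$ while paying only lower-order time at strength~$1$ and no time at larger strengths w.h.p.; (iii) sum these contributions over the run.

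\textbf{Key steps in order.} First I would set up the fitness-level/drift argument for MST exactly as in \cite{NeumannW07}: when the current tree is at ``distance'' $i$ from optimal (i.e. $i$ non-optimal edges remain, appropriately counted), an improving swap requires flipping one specific pair of bits, and there are enough such improving pairs that the success probability at strength~$2$ is $\Theta(i/m^2)$ or the relevant rank-weighted quantity, yielding the harmonic-type sum $\sum 1/(\text{rank})$ that produces the $(m^2/2)\ln(r_1+\dots+r_m)$ factor. Crucially the gap is exactly~$2$ here, so by Lemma~\ref{lem:failure-probability} the algorithm finds each improvement at strength~$2$ with probability $\ge 1-1/R$, and since $R=m^4$ exceeds the number of improvements (which is $O(m^2)$, bounded as in the linear-function proof), a union bound shows the strength never exceeds~$2$ with probability $1-O(m^2/R)=1-O(1/m^2)=1-o(1)$. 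Second, I would account for the strength-$1$ time: after each two-bit improvement taking $u$ iterations, the budget sets $B\gets u/((\ln n)(r-1))$ with $r=2$, so the next epoch tries strength~$1$ for at most $u/\ln n$ steps; summing over all improvements gives total strength-$1$ time at most $(1/\ln n)$ times the total run length, hence a lower-order $(1+o(1))$ correction rather than the $4m\ln m\cdot\expect{S}$ overhead that plagues \sdrlss. Third, I would invoke Lemma~\ref{lem:realradius} with $k=m$ (or $k=3$) to bound the rare recovery episodes, where an improvement is missed at strength~$2$ and the radius drifts upward: each such event costs only $o(\binom{n}{1})=o(m)$ expected steps to return the radius to the gap, and since such events occur with probability $O(1/R)$ per improvement, their total expected contribution is $O((m^2/R)\cdot o(m))=o(m^2\ln m)$, absorbed into the $(1+o(1))$ factor.

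\textbf{Assembling the bound.} I would then collect the dominant term: the expected number of strength-$2$ attempts is the sum over improvements of $\Theta(m^2)$ divided by the number of available improving swaps at each level, which by the \cite{NeumannW07} rank argument telescopes into $(m^2/2)(1+\ln(r_1+\dots+r_m))$. Bounding each rank $r_i\le m$ gives $r_1+\dots+r_m\le m^2$, hence $\ln(r_1+\dots+r_m)\le 2\ln m$; but the sharper accounting (each rank distinct, so the sum behaves like $\sum \Theta(m^2/i)$ over the relevant levels) yields the claimed $(1+o(1))m^2\ln m$. The strength-$1$ and recovery contributions are both $o(m^2\ln m)$ and the failure probability of exceeding strength~$2$ is $o(1)$, so they only enter through the $(1+o(1))$ prefactor.

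\textbf{Main obstacle.} The hard part will be the bookkeeping for the strength-$1$ iterations via the budget variable~$B$. The subtlety is that $B$ is computed from the \emph{previous} epoch's waiting time $u$, and $u$ is itself a random variable whose distribution depends on how many improving swaps were available; I must argue that summing $u/\ln n$ over all epochs does not exceed $(1/\ln n)$ times the total optimization time, which requires care because a single long epoch (large $u$) licenses a correspondingly long strength-$1$ phase in the next epoch. The clean way is to charge each strength-$1$ step against the strength-$2$ step that generated its budget, giving a global factor $(1+1/\ln n)$ on the total, exactly as in the proof of Theorem~\ref{theo:linearfunctions-approx}; verifying that this charging is valid across all epochs, and that the $o(1)$ failure events (radius exceeding~$2$, or $B$ being anomalously large) do not break it, is where the real work lies.
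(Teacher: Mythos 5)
Your proposal is correct and follows essentially the same route as the paper: the paper makes your rank-based accounting precise via multiplicative drift on the potential $g(x)=\sum_{i=1}^m x_i r_i$ (drift at least $2X^{(t)}/m^2$ from the at most $\binom{m}{2}$ disjoint improving two-bit flips), charges the strength-$1$ steps to the previous epoch's strength-$2$ steps through the budget $B=u/\ln n$ exactly as you describe, and handles excursions above strength~$2$ by Lemma~\ref{lem:realradius} together with a restart of the conditional drift argument. The only slip is your recovery cost: Lemma~\ref{lem:realradius} gives $o\bigl(\binom{m}{2}\bigr)$ per excursion, not $o\bigl(\binom{n}{1}\bigr)$, but since excursions occur with total probability $O(1/m^2)$ this remains a lower-order term and does not affect the bound.
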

The proof is similar to the proof of Theorem~6 in \cite{RajabiWittEvo21} by using drift multiplicative analysis \citep{LenglerDriftBookChapter}. However, we show that the radius memory mechanism controls the number of iterations with strength~$1$, and we apply Lemma~\ref{lem:realradius} to show that if the algorithm uses strengths larger than~$2$, the algorithm shortly after makes an improvement with strength~$2$ again.

\begin{proofof}{Theorem~\ref{theo:mst}}
We aim at using multiplicative drift analysis using 
$g(x)\!=\!\sum_{i=1}^m x_ir_i$ as potential function. Since the 
algorithm
has different states we do not have the same lower bound on 
the drift towards the optimum. However, at strength~$1$ no 
mutation is accepted since the fitness function 
from \cite{NeumannW07} gives a huge penalty to non-trees. Hence, 
our plan is to conduct the drift analysis conditioned 
on that the strength is at most~$2$ and account for the steps 
spent at strength~$1$ separately. Cases where the strength 
exceeds~$2$ will be handled by an error analysis and a restart argument.

Let $X^{(t)}\coloneqq g(x^{t})-g(\xopt)$ for the current 
search point~$x^{(t)}$ and an optimal search point $\xopt$. 
Since the algorithm behaves stochastically 
the same on the original fitness function~$f$ and the potential 
function~$g$, we obtain that $\expect{X^{(t)} - X^{(t+1)}\mid X^{(t)}} 
\ge X^{(t)}/\binom{m}{2}\ge 2X^{(t)}/m^2$ 
since the $g$-value can be 
decreased by altogether 
$g(x^{t})-g(\xopt)$ via a sequence of at most 
$\binom{m}{2}$ disjoint two-bit flips; see also the proof of 
Theorem~15 in \cite{DoerrJohannsenWinzenALGO12} for the underlying 
combinatorial argument. Let $T$ denote the number of steps at 
strength~$2$ 
until $g$ is minimized, assuming no larger strength to occur. 
Using the multiplicative drift theorem, 
we have $\expect{T}\le (m^2/2)(1+\ln(r_1+\dots+r_m)) \le 
(m^2/2)(1+\ln(m^2))$ and by 
the tail bounds for multiplicative drift (\eg, \cite{LenglerDriftBookChapter}) 
it holds that 
$\prob{T> (m^2/2)(\ln(m^2) + \ln(m^2))} \le e^{-\ln(m^2)} = 1/m^2$. Note 
that this bound on $T$ is below the threshold for strength~$2$ 
since $\binom{m}{2}\ln R = (m^2-m) \ln (m^4) \ge 
(m^2/2)(4\ln m)$ for $m$ large enough. Hence, with probability 
at most~$1/m^2$ the algorithm fails to find the optimum before the 
strength can change from~$2$ to a different value 
due to the threshold being exceeded. 

We next bound the expected number of steps spent at larger strengths. By Lemma~\ref{lem:realradius}, if the algorithm fails to find an improvement with the right radius, \ie when the radius becomes 
$r=3>\gap(x)=2$, then, in at most $o(\binom{m}{r-1})$ iterations in expectation, 
the radius is set to the gap size of the current search point at that time.
Thus, by using Lemma~\ref{lem:realradius}, an increase of radius above~$2$ costs at most $o\!\left(\binom{m}{2} \right)$ iterations to make improvements with strength~$2$ again and set the radius to two. This time is only a lower-order term 
of the runtime bound claimed in the theorem.
If the strength exceeds~$2$, we wait 
for it to become~$2$ again and restart the previous drift analysis, which 
is conditional on strength at most~$2$. Since the probability 
of a failure is at most~$1/m^2$, this gives an expected number 
of at most $1/(1-m^{-2})$ restarts, which is $1+o(1)$ as well.

It remains to bound the number of steps at strength~$1$. For each strict improvement,
$B$ is set to $1/\ln n\cdot u$ where $u$ is the counter value, the counter is reset, and radius~$r$ is set to~$2$. Thereafter, $B$ steps pass before
the strength becomes~$2$ again. Hence, if the strength does not exceed~$2$  
before the optimum is reached, this contributes a term of $(1+1/\ln n)$ to the number of iterations with strength~$2$ in the previous epoch. 
 Also, in the beginning of the algorithm, there is a complete phase with strength~$1$ costing $m\ln R$, which only contributes a lower-order term. 
\end{proofof}

Theorem~\ref{theo:mst} is interesting since it is asymptotically tight
and does not suffer from the additional $\log(w_{\max})$ factor
known from the analysis of the classical \oneoneea 
\citep{NeumannW07}. In fact, this seems to be the first  
asymptotically tight analysis 
 of a globally searching (1+1)-type algorithm on the MST. So far 
a tight analysis of evolutionary algorithms on the MST
was only known for \rlsonetwo with one- and two-bit flip 
mutations \citep{RaidlKJ06}. Our bound in Theorem~\ref{theo:mst} is by a factor of roughly~$2$  better 
since it avoids an expected waiting time of~$2$ for a two-bit flip. On the technical side, it is interesting
that we could apply drift analysis in its proof despite the algorithm being able to switch 
between different mutation strengths influencing the current drift.

\section{Radius Memory can be Detrimental}
\label{sec:slowdown}

After a high mutation strength has been selected, \eg, to overcome 
a local optimum, the radius memory decreases the threshold values 
for phase lengths related to 
lower strengths. As we have seen in Lemma~\ref{lem:realradius}, 
\sdrlsss can often return to a smaller strength quickly. However, 
we can also point out situations where using the smaller strength 
with their original threshold values as used in the original \sdrlss from \cite{RajabiWittEvo21} is crucial.

Our example is based on a general construction principle that 
can be traced back to \cite{WittCEC03} and was picked up in 
\cite{RajabiWittGECCO20} to show situations where stagnation in 
the context of the \ooea is detrimental; see that paper for a detailed 
account of the construction principle. In \cite{RajabiWittEvo21}, 
the idea was used to demonstrate situations where bit-flip mutations 
outperforms \sdrlss. Roughly speaking, the  
functions combine two gradients one of which is easier to exploit 
for an algorithm~$A$ while the other is easier to exploit for another algorithm~$B$.
By appropriately defining local and global optima close to the end 
of the search space in direction of the gradients, either Algorithm~$A$ 
significantly outperforms Algorithm~$B$ or the other way round.

We will now define a function on which \sdrlsss is exponentially slower 
than \sdrlss. 
In the following, we will imagine any
bit string~$x$ of length~$n$ 
as being split into a prefix $a\coloneqq a(x)$ of length~$n-m$ and a suffix $b\coloneqq b(x)$ of 
length $m$, where $m$ is defined below.
 Hence, $x=a(x)\circ b(x)$, where $\circ$ denotes the concatenation.  
The prefix~$a(x)$ is called \emph{valid} if it is of the form 
$1^i 0^{n-m-i}$, \ie, $i$ leading ones and $n-m-i$ trailing zeros. The prefix fitness 
$\pre(x)$ of a string~$x\in\{0,1\}^n$ with valid prefix 
$a(x)=1^i0^{n-m-i}$ equals~$i$, the 
number of leading ones. The suffix consists of $\lceil n^{1/8}\rceil$ consecutive blocks of 
$\lceil n^{3/4}\rceil$ bits each, altogether $m=O(n^{7/9})$ bits. 
Such a block is called \emph{valid} if it contains either~$0$ or $2$ one-bits; moreover, it is called \emph{active} if it contains~$2$ 
and \emph{inactive} if it contains~$0$ one-bits. A suffix where all blocks are valid and where all blocks following 
first inactive block are also inactive is called valid itself, and the suffix fitness $\suff(x)$ of a 
string~$x$ with valid suffix~$b(x)$  
is the number of leading active blocks before the first inactive one. Finally, we call $x\in\{0,1\}^n$ valid 
if both its prefix and suffix are valid.

The final fitness function is a weighted combination of $\pre(x)$ and $\suff(x)$. We define 
for $x\in\{0,1\}^n$, where $x=a\circ b$ with the above-introduced $a$ and~$b$, 
\begin{align*}
& \preferonebit(x)\coloneqq \\
& \begin{cases}
 n -m - \pre(x) + \suff(x)   & \text{ if $\suff(x)\le n^{1/9}$ $\wedge$  $x$ valid}\\
 n^2\pre(x) + \suff(x)& \text{ if $n^{1/9}\!<\!\suff(x)\!\le\! n^{1/8}/2$ $\wedge$ $x$ valid}\\
 n^2 (n\!-\!m) + \suff(x)-\!n-\!1  \hspace*{-3ex}& \text{ if $\suff(x)>n^{1/8}/2$ $\wedge$ $x$ valid}\\
- \onemax(x) & 
  \text{ otherwise.} 
\end{cases}
\end{align*}

We note that all search points in the third case have a fitness of at 
least~$n^2 (n-m) - n - 1$, which 
is bigger than $n^2(n-m-1) + n$, an upper bound on the fitness of search points that fall into the second case 
without having $m$ leading active blocks in the suffix. Hence, search points~$x$ where $\pre(x)=n-m$ and 
$\suff(x)=\lceil n^{1/8}\rceil$ represent local optima of second-best overall fitness. The set of global optima 
equals the points where $\suff(x)=\lfloor n^{1/8}/2\rfloor$ and $\pre(x)=n-m$, 
which implies that at least $n^{1/8}$ 
bits (two from each block) have to be flipped 
simultaneously to escape from the local toward the global optimum.  The 
first case is special in that the function is decreasing in the 
\pre-value as long as $\suff(x)\le n^{1/9}$. Typically, the 
first valid search point falls into the first case. Then two-bit flips 
are essential to make progress and the radius memory of \sdrlsss 
will be used when waiting for the next improvement. 
After leaving the first case, 
since two-bit flips happen quickly enough, the memory will 
make progress via one-bit flips unlikely, leading to the local optimum.

We note that function \preferonebit shares some features with the 
function $\needhighmut$ 
from 
\cite{RajabiWittGECCO20} and the function \needglobalmut from 
\cite{RajabiWittEvo21}. However, it  contains an extra case 
for small suffix values, uses different block sizes and block count  
for the suffix, and inverts roles of prefix and suffix 
by leading to a local optimum when the suffix is optimized first.

In the following, we make the above ideas precise and show that 
\sdrlss outperforms \sdrlsss on \preferonebit.

\begin{theorem} \label{theo:preferonebit}
With probability at least $1-1/n^{1/8}$, \sdrlsss with $R\ge n^{3+\epsilon}$ 
for an arbitrary constant $\epsilon>0$ needs time $2^{\Omega(n^{1/8})}$ 
to optimize \preferonebit. With probability at least $1-1/n$, \sdrlss 
 with $R\ge n^{3+\epsilon}$ optimizes the function in time $O(n^2)$.
 \end{theorem}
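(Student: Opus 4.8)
The plan is to establish the two halves of the theorem separately, since they concern different algorithms (\sdrlsss versus \sdrlss) and rely on fundamentally different behaviors. For the first (negative) statement about \sdrlsss, I would show that the radius memory mechanism traps the algorithm at the local optimum of second-best fitness; for the second (positive) statement about \sdrlss, I would show that the repeated resetting of strength to~$1$ is exactly what allows \sdrlss to take the one-bit-flip route in the first case and avoid the local optimum.

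\textbf{Upper bound for \sdrlss.} First I would argue that with high probability the algorithm quickly reaches a valid search point, which typically falls into the first case where $\suff(x)\le n^{1/9}$. In this regime the function is \emph{decreasing} in the \pre-value, so progress is made via two-bit flips inside suffix blocks (activating a block costs a two-bit flip). The key point is that \sdrlss resets strength and radius to~$1$ after \emph{every} improvement; after a two-bit-flip improvement it again spends $\binom{n}{1}\ln R$ steps at strength~$1$ before escalating to strength~$2$. While it waits at strength~$1$, the only accepted moves are one-bit flips (recall equal-fitness moves are accepted only at $r=1$), and here a one-bit flip that removes a leading one in the prefix \emph{improves} the fitness (since \pre is penalized). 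Thus the prefix decays toward $0^{n-m}$ via one-bit flips \emph{before} the suffix is fully activated, steering the algorithm away from the local optimum. I would formalize that during the $\Theta(\ln R)$ one-bit phase the prefix shrinks steadily, so by the time enough blocks are active the search point no longer sits on the trajectory to the local optimum; the total time is dominated by $O(n^2)$ two-bit-flip improvements and $O(n)$-ish one-bit improvements, each taking $O(n\ln R)=O(n\log n)$ waiting time, giving $O(n^2)$ overall. A union bound over the $O(n^2)$ improvements controls the failure probability of ever escalating strength beyond~$2$ via Lemma~\ref{lem:failure-probability}, yielding the $1-1/n$ guarantee.

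\textbf{Lower bound for \sdrlsss.} Here the plan is to show that radius memory causes the opposite behavior: after leaving the first case (where two-bit flips are indeed needed and the memory is engaged), \sdrlsss keeps finding two-bit-flip improvements \emph{quickly}, so the budget~$B=\frac{u}{(\ln n)(r-1)}$ it allocates to strength~$1$ is tiny. Consequently the algorithm spends almost no time at strength~$1$, never gets the chance to shrink the prefix via one-bit flips, and instead drives the suffix upward until it reaches $\suff(x)=\lceil n^{1/8}\rceil$ with $\pre(x)=n-m$, i.e.\ the local optimum. At that point escaping toward a global optimum requires flipping at least $n^{1/8}$ bits simultaneously (two per block), which costs time $2^{\Omega(n^{1/8})}$. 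I would make this precise by showing that with probability $1-o(1)$ the small allocated budget is insufficient to complete a successful one-bit flip at strength~$1$ in any epoch after the first case, so the prefix stays near-maximal; the dominant event is reaching the local optimum, and the exponential lower bound on escape time then follows directly from the gap size.

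\textbf{The main obstacle} I expect is controlling the interplay between prefix and suffix dynamics precisely enough for the \sdrlsss lower bound: I must show not only that two-bit improvements arrive quickly (so $B$ is small) but that they arrive \emph{quickly enough, consistently across epochs}, so that the cumulative strength-$1$ time never suffices for a prefix-reducing one-bit flip before the suffix is completed. This requires a careful epoch-by-epoch argument tracking the counter value~$u$ at each two-bit success and verifying that $B=\Theta(u/((\ln n)\cdot 1))$ stays below the $\binom{n}{1}$ waiting time needed for a successful one-bit flip, combined with a drift or potential argument on the suffix fitness to confirm the local optimum is reached before any derailing one-bit event. The construction's block sizes ($\lceil n^{3/4}\rceil$ bits, $\lceil n^{1/8}\rceil$ blocks) are presumably tuned so these two timescales separate cleanly, and the bulk of the work is verifying that separation holds with the stated $1-1/n^{1/8}$ probability.
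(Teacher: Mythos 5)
Your overall architecture matches the paper's: treat the two algorithms separately, credit \sdrlss's success to its reset to strength~$1$ after every improvement, and credit \sdrlsss's failure to the tiny budget $B=u/((\ln n)(r-1))$ inherited from fast two-bit successes, culminating in a point from which at least $n^{1/8}$ bits must flip simultaneously. Your quantitative sketch for the lower bound (each suffix activation arrives within $O(n^{3/4})$ steps, hence $B=O(n^{3/4}/\ln n)$, hence probability $O(n^{-1/4})$ of a successful one-bit flip per window, hence a union bound over $O(n^{1/8})$ windows) is exactly the paper's calculation.

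There is, however, a genuine gap: you have inverted the role of the prefix, and this breaks the upper-bound half. The global optimum requires $\pre(x)=n-m$ (a \emph{maximal} prefix), and the third-case fitness $n^2(n-m)+\suff(x)-n-1$ does not depend on $\pre(x)$ at all, so driving the prefix toward $0^{n-m}$ does not ``steer the algorithm away from the local optimum'' --- a point with empty prefix and fully active suffix is still a second-best-fitness trap requiring an $n^{1/8}$-bit jump. The mechanism you are missing is the second case of the definition, which your proof never engages with: once $\suff(x)>n^{1/9}$, the fitness switches to $n^2\pre(x)+\suff(x)$, so one-bit flips that \emph{append} a leading one become improvements worth $n^2$ each. \sdrlss wins because after every improvement it spends up to $n\ln R$ steps at strength~$1$, finds each of the $\Theta(n)$ prefix-increasing one-bit flips with probability $1-1/R$ (union bound over $O(n)$ improvements gives the $1-1/n$ guarantee), and so drives $\pre$ to $n-m$ before $\suff$ exceeds $n^{1/8}/2$; the remaining $O(n^{1/8})$ two-bit suffix activations then reach the global optimum. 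The $O(n^2)$ bound comes from these $\Theta(n)$ one-bit improvements at expected cost $\Theta(n)$ each --- not from ``$O(n^2)$ two-bit-flip improvements'' (there are only $O(n^{1/8})$ of those), and not from charging $O(n\ln R)$ per one-bit improvement, which would give $O(n^2\log n)$. The same inversion appears in your lower bound (``never gets the chance to shrink the prefix,'' arrival at the local optimum ``with $\pre(x)=n-m$''): what \sdrlsss is starved of are the prefix-\emph{increasing} flips, and it enters the third case with $\pre(x)$ still near~$0$; your conclusion survives only because the third-case fitness ignores the prefix. The \sdrlss half needs to be rebuilt around the second case before it can be made rigorous.
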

  \begin{proof}
As in the proof of Theorem~4.1 in \cite{RajabiWittGECCO20}, we have 
that the first valid search point (\ie, search point of non-negative fitness) of both \sdrlss and \sdrlsss
has both  $\pre$- and $\suff$-value 
value of at most~$n^{1/9}/2$ with probability $2^{-\Omega(n^{1/9})}$. In the following, 
we tacitly assume that we have reached a valid search point of the described maximum $\pre$- and $\suff$-value and note that 
this changes the required number of improvements to reach local or global maximum only by a $1-o(1)$ factor. For readability 
this factor will not be spelt out any more.

Given the situation with a valid search point~$x$ where
$\suff(x)<n^{1/9}/2$, fitness improvements 
only possible by increasing the \suff-value. Since the probability 
of a \suff-improving steps is at least $\binom{n^{3/2}}{2}/n^2=\Omega(n^{-1/2})$ 
at strength~$2$, the time for both algorithms to reach 
the second case of the definition of \preferonebit is $O(n^{1/9}n^{1/2})=  
O(n^{11/18})$ according to Lemma~\ref{lem:runtime-of-Em}, and 
by repeating independent phases 
and Markov's inequality, the time is $O(n)$ with 
 probability exponentially close to~$1$. Afterwards, one-bit flips 
 increasing the \pre-value are strictly improving and happen while 
 the strength is~$1$ with probability at least~$1-1/R$ 
 in \sdrlss, which does not have 
 radius memory. Therefore, by a union bound over all $O(n)$ improvements, 
 with probability at least $1-1/n$,  
 \sdrls increases the \pre-value to its maximum before the \suff-value 
 becomes greater than $\lceil n^{1/9}\rceil$. The time 
 for this is $O(n^2)$ even with probability exponentially close 
 to~$1$ by Chernoff bounds. Afterwards, by 
 reusing the above analysis to leave the first case, with probability  
 at least $1-1/n$ a number of $o(n^2)$ steps is sufficient for \sdrls 
 to find the global optimum. This proves the second statement 
 of the theorem.
 
 To prove the first statement, \ie, the claim for \sdrlsss, we first note that 
 the probability of improving the \pre-value at strength~$2$  
 is $O(n^2)$ since two specific bits would have to flip. 
 Hence, such steps never happen in $\Theta(n)$ steps 
 with probability $1-O(1/n)$. By contrast, 
 a \suff-improving step at strength~$2$, which has probability 
 $\Omega(n^{-1/2})$, happens within $O(n^{3/4})$ 
 steps with probability at least $1-1/n^{1/4}$ according to Markov's inequality. 
 In this case, the radius memory of \sdrlsss will set a threshold of 
 $B=n^{3/4}$ for the subsequent iterations at strength~$1$. The probability 
 of improving the \pre-value within this time is $O(1/n^{-1/4})$
 by a union bound, noting the success probability of at most~$1/n$. Hence, 
 the probability of having at least $n^{1/8}$ improvements of the \suff-value
 within $n^{3/4}$ steps each before an improvement of the \pre-value (at 
 strength~$1$) happens, is at least $1-1/n^{1/8}$ by a union bound. If 
 all this happens, 
 the algorithm has to flip at least $n^{1/8}$ bits simultaneously,
 which requires $2^{\Omega(n^{1/8})}$ steps already to reach the required strength. The total failure probability is $O(1/n^{1/8})$. 
 \end{proof}

\section{Experiments}
\label{sec:experiments}
We ran an implementation of five algorithms \sdrlsss, \sdrlss, \oofea with $\beta=1.5$ from \cite{DoerrLMNGECCO17}, the standard \oneoneea and \rlsonetwo on the MST problem with the 
fitness function from \cite{NeumannW07} for three types of graphs called \TG, \erdosrenyi with $p=(2\ln n)/n$, and \Kn. We carried out a similar experiment to \cite{RajabiWittEvo21} with the more additional class of complete graphs \Kn to illustrate the performance of the new algorithm and compare with the other algorithms.

\begin{figure}
    \centering
        \subfloat[\centering Graphs \TG]{{\includegraphics[width=4cm]{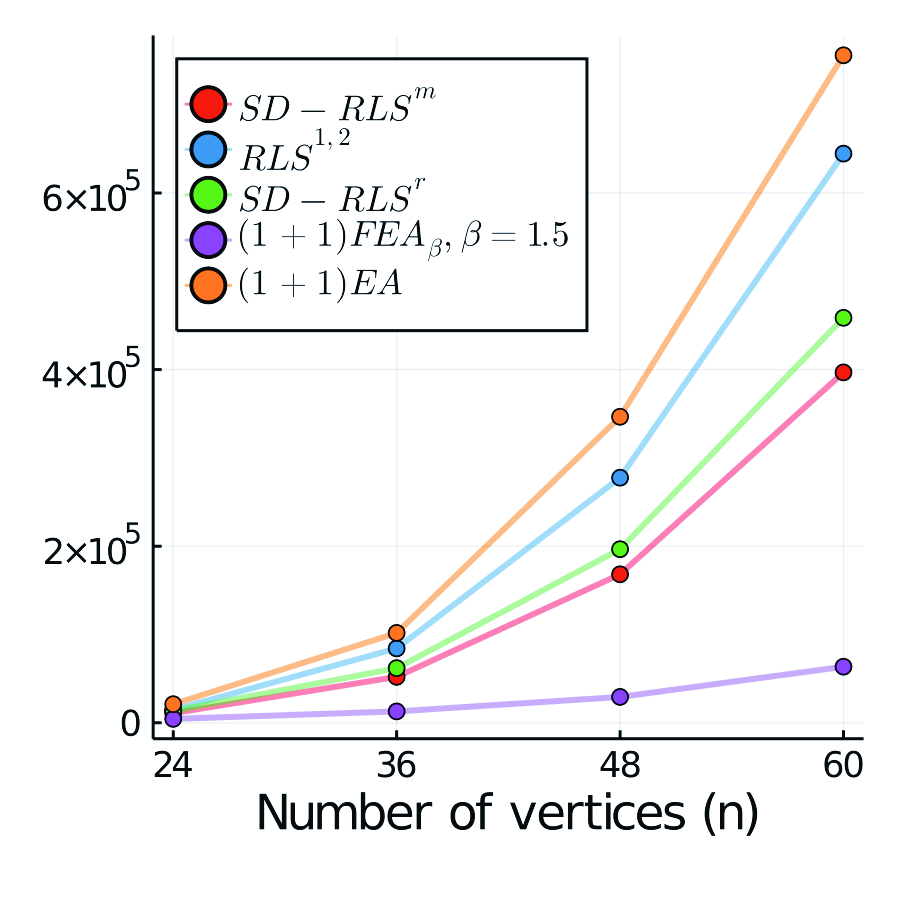} }\label{fig:TGresults}}
    \quad
    \subfloat[\centering Graphs \erdosrenyi]{{\includegraphics[width=4cm]{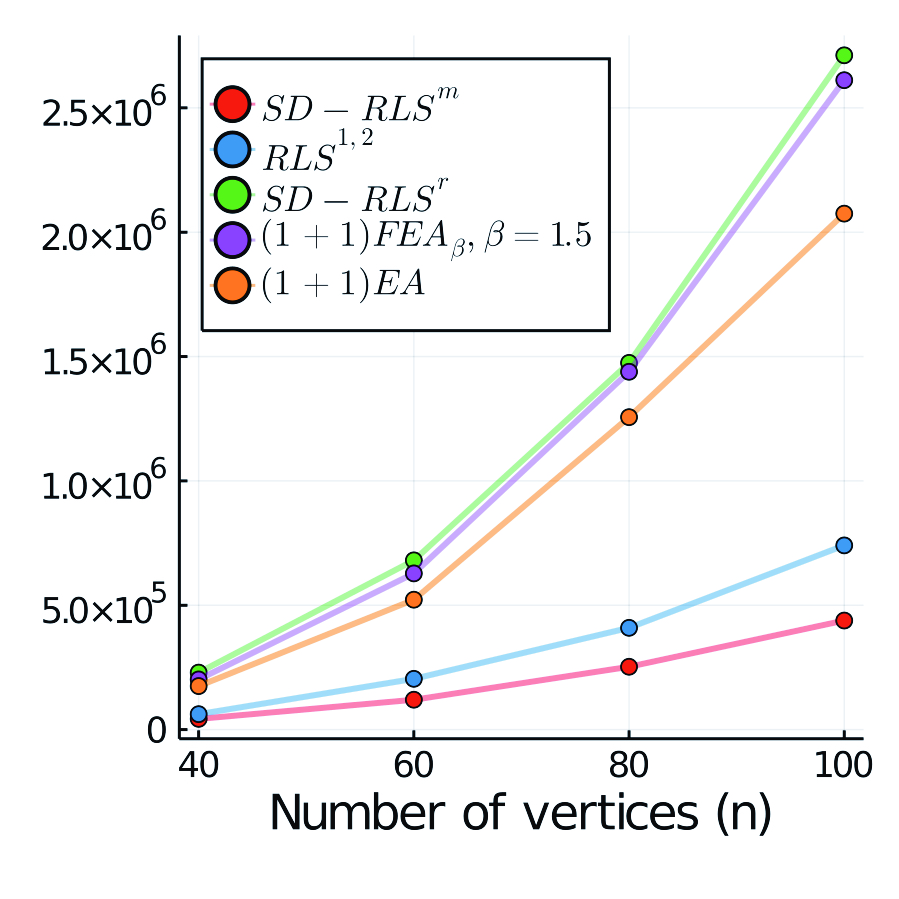} }\label{fig:ERresults}}
    \quad
    \subfloat[\centering Graphs \Kn]{{\includegraphics[width=4cm]{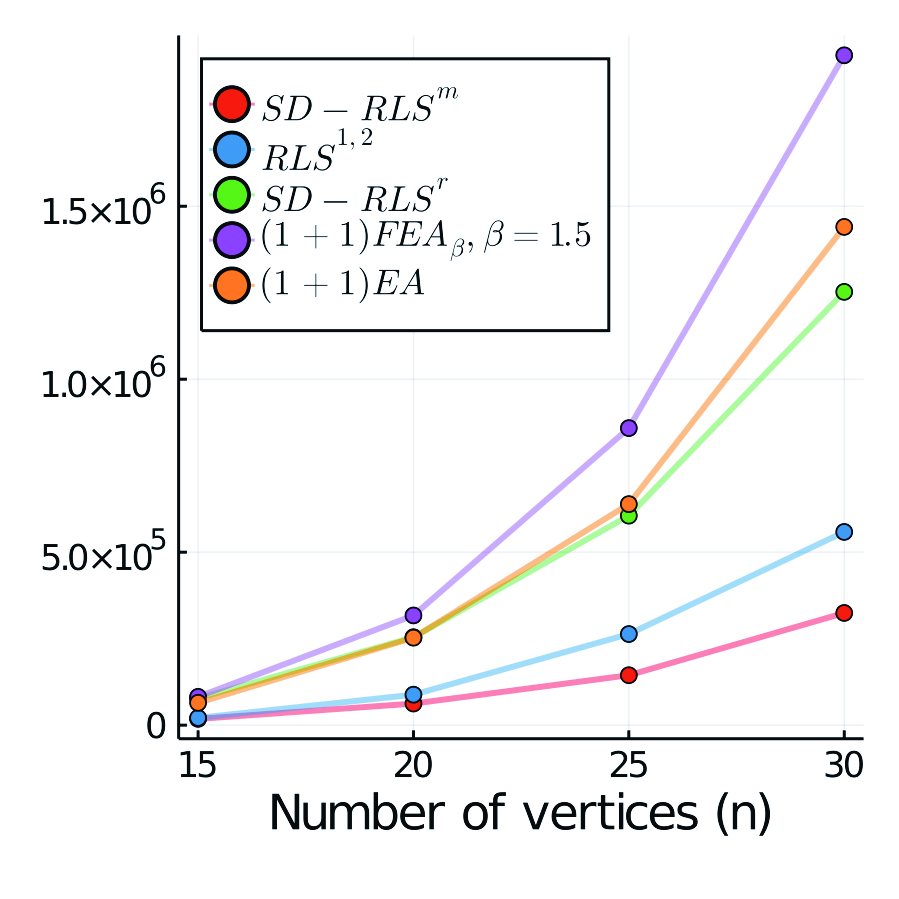} }\label{fig:Knresults}}
    \caption{Average number of fitness calls (over 200 runs) the mentioned algorithms took to optimize the fitness function MST of the graphs.}
    \label{fig:example}
\end{figure}

The graph \TG with $n$ vertices and $m=3n/4+\binom{n/2}{2}$ edges contains a sequence of $p=n/4$ triangles which are connected to each other, and the last triangle is connected to a complete graph of size $q=n/2$. Regarding the weights, the edges of the complete graph have the weight 1, and we set the weights of edges in triangle to $2a$ and $3a$ for the side edges and the main edge, respectively. In this paper, we consider $a=n^2$. The graph \TG is used for estimating lower bounds on the expected runtime of the \ooea and RLS in the literature \citep{NeumannW07}. As can be seen in Figure~\ref{fig:TGresults}, \ooea with heavy-tailed mutation, \oofea, with $\beta=1.5$ outperformed the rest of the algorithms. However,  \sdrlss and \sdrlsss also outperformed the standard \ooea and \rlsonetwo.

Regarding the graphs \erdosrenyi, we produced some random \erdosrenyi graphs with $p=(2\ln n)/n$ and assigned each edge an integer weight in the range $[1,n^2]$ uniformly at random. We also checked that the graphs are certainly connected. Then, we ran the implementation to find the MST of these graphs. The obtained results can be seen in Figure~\ref{fig:ERresults}. As discussed in Section~6 in \cite{RajabiWittEvo21}, \sdrlss does not outperform the \ooea and \rlsonetwo on MST with graphs when the number of strict improvements in \sdrlss is large. However, the proposed algorithm in this paper, \sdrlsss outperformed the rest of the algorithms, although there can be a relatively large number of improvements on such graphs. We can also see this superiority in 
Figure~\ref{fig:Knresults} for the complete graphs \Kn with random edge weights in the range $[1,n^2]$.

For statistical tests, we ran the algorithms on the graphs \TG and \erdosrenyi 200 times, and all p-values obtained from a Mann-Whitney U-test between the algorithms, with respect to the null hypothesis of identical behavior, are less than $10^{-2}$ except for the results regarding the smallest size in each set of graphs.

\section*{Conclusions}
We have investigated stagnation detection with the $s$-bit flip
operator as known from randomized local search and introduced 
a mechanism called \emph{radius memory} that allows continued 
exploitation 
of large~$s$ values that were useful in the past. Improving earlier
work from \cite{RajabiWittEvo21}, this leads to tight bounds 
on complex 
multimodal problems like linear functions 
with uniform constraints and the minimum spanning tree problem, 
while still optimizing unimodal and jump functions as efficiently as in 
earlier work. 
The bound for the MST is the first tight runtime bound for a global
search heuristics and improves upon the runtime of classical RLS 
algorithms by a factor of roughly~$2$. 
We have also pointed out situations where 
the radius memory is detrimental for the optimization process. 
In the future,
we would like to investigate the concept of stagnation detection 
with radius memory in population-based algorithms and plan analyses on
further combinatorial optimization problems.

 \section*{Acknowledgement}
	This work was supported  by a grant by the Danish Council for Independent Research  (DFF-FNU  8021-00260B).

\bibliographystyle{mynatbib_english}
\bibliography{references}

\begin{thebibliography}{19}
\expandafter\ifx\csname natexlab\endcsname\relax\def\natexlab#1{#1}\fi
\expandafter\ifx\csname url\endcsname\relax
  \def\url#1{\texttt{#1}}\fi
\expandafter\ifx\csname urlprefix\endcsname\relax\def\urlprefix{URL }\fi

\bibitem[{Doerr(2020)}]{DoerrProbabilisticBookChapter}
Doerr, Benjamin (2020).
\newblock Probabilistic tools for the analysis of randomized optimization
  heuristics.
\newblock In Doerr, Benjamin and Neumann, Frank (eds.), \emph{Theory of
  Evolutionary Computation~--~Recent Developments in Discrete Optimization},
  1--87. Springer.

\bibitem[{Doerr and Doerr(2016)}]{DoerrDoerrAlgo16}
Doerr, Benjamin and Doerr, Carola (2016).
\newblock The impact of random initialization on the runtime of randomized
  search heuristics.
\newblock \emph{Algorithmica}, \textbf{75}(3), 529--553.

\bibitem[{Doerr and Doerr(2018)}]{DoerrDoerrAlgorithmica18}
Doerr, Benjamin and Doerr, Carola (2018).
\newblock Optimal static and self-adjusting parameter choices for the
  (1+({\(\lambda\)}, {\(\lambda\)})) genetic algorithm.
\newblock \emph{Algorithmica}, \textbf{80}(5), 1658--1709.

\bibitem[{Doerr and Doerr(2020)}]{DoerrDoerrParameterBookChapter}
Doerr, Benjamin and Doerr, Carola (2020).
\newblock Theory of parameter control for discrete black-box optimization:
  Provable performance gains through dynamic parameter choices.
\newblock In Doerr, B. and Neumann, F. (eds.), \emph{Theory of Evolutionary
  Computation -- Recent Developments in Discrete Optimization}, 271--321.
  Springer.

\bibitem[{Doerr, Doerr and Ebel(2015)Doerr, Doerr, and Ebel}]{DoerrEbelTCS15}
Doerr, Benjamin, Doerr, Carola, and Ebel, Franziska (2015).
\newblock From black-box complexity to designing new genetic algorithms.
\newblock \emph{Theoretical Computer Science}, \textbf{567}, 87--104.

\bibitem[{Doerr, Fouz and Witt(2010)Doerr, Fouz, and Witt}]{DoerrFW10}
Doerr, Benjamin, Fouz, Mahmoud, and Witt, Carsten (2010).
\newblock Quasirandom evolutionary algorithms.
\newblock In \emph{Proc. of GECCO '10}, 1457--1464. ACM.

\bibitem[{Doerr, Johannsen and Winzen(2012)Doerr, Johannsen, and
  Winzen}]{DoerrJohannsenWinzenALGO12}
Doerr, Benjamin, Johannsen, Daniel, and Winzen, Carola (2012).
\newblock Multiplicative drift analysis.
\newblock \emph{Algorithmica}, \textbf{64}, 673--697.

\bibitem[{Doerr et~al.(2017)Doerr, Le, Makhmara, and Nguyen}]{DoerrLMNGECCO17}
Doerr, Benjamin, Le, Huu~Phuoc, Makhmara, R{\'{e}}gis, and Nguyen, Ta~Duy
  (2017).
\newblock Fast genetic algorithms.
\newblock In \emph{Proc. of {GECCO} '17}, 777--784. ACM Press.

\bibitem[{Doerr and Zheng(2021)}]{DoerrZhengAAAI2021}
Doerr, Benjamin and Zheng, Weijie (2021).
\newblock Theoretical analyses of multi-objective evolutionary algorithms on
  multi-modal objectives.
\newblock In \emph{Proc.\ of AAAI~2021}.
\newblock To appear.

\bibitem[{Hansen and Mladenovic(2018)}]{HansenMladenovic18}
Hansen, Pierre and Mladenovic, Nenad (2018).
\newblock Variable neighborhood search.
\newblock In Mart{\'{\i}}, Rafael, Pardalos, Panos~M., and Resende, Mauricio
  G.~C. (eds.), \emph{Handbook of Heuristics}, 759--787. Springer.

\bibitem[{Lengler(2020)}]{LenglerDriftBookChapter}
Lengler, Johannes (2020).
\newblock Drift analysis.
\newblock In Doerr, B. and Neumann, F. (eds.), \emph{Theory of Evolutionary
  Computation -- Recent Developments in Discrete Optimization}, 89--131.
  Springer.

\bibitem[{Lugo(2017)}]{MathoverflowML17}
Lugo, Michael (2017).
\newblock Sum of "the first $k$" binomial coefficients for fixed $n$.
\newblock MathOverflow.
\newblock \urlprefix\url{https://mathoverflow.net/q/17236}.
\newblock (version: 2017-10-01).

\bibitem[{Neumann, Pourhassan and Witt(2019)Neumann, Pourhassan, and
  Witt}]{NPWGECCO19}
Neumann, Frank, Pourhassan, Mojgan, and Witt, Carsten (2019).
\newblock Improved runtime results for simple randomised search heuristics on
  linear functions with a uniform constraint.
\newblock In \emph{Proc.\ of GECCO~2019}, 1506--1514. {ACM} Press.

\bibitem[{Neumann and Wegener(2007)}]{NeumannW07}
Neumann, Frank and Wegener, Ingo (2007).
\newblock Randomized local search, evolutionary algorithms, and the minimum
  spanning tree problem.
\newblock \emph{Theoretical Computer Science}, \textbf{378}, 32--40.

\bibitem[{Raidl, Koller and Julstrom(2006)Raidl, Koller, and
  Julstrom}]{RaidlKJ06}
Raidl, G{\"{u}}nther~R., Koller, Gabriele, and Julstrom, Bryant~A. (2006).
\newblock Biased mutation operators for subgraph-selection problems.
\newblock \emph{{IEEE} Transaction on Evolutionary Computation},
  \textbf{10}(2), 145--156.

\bibitem[{Rajabi and Witt(2020)}]{RajabiWittGECCO20}
Rajabi, Amirhossein and Witt, Carsten (2020).
\newblock Self-adjusting evolutionary algorithms for multimodal optimization.
\newblock In \emph{Proc.\ of GECCO~'20}, 1314--1322. {ACM} Press.

\bibitem[{Rajabi and Witt(2021)}]{RajabiWittEvo21}
Rajabi, Amirhossein and Witt, Carsten (2021).
\newblock Stagnation detection with randomized local search.
\newblock In \emph{Proc.\ of EvoCOP~'21}. Springer.
\newblock To appear.

\bibitem[{Wegener(2001)}]{WegenerMethods}
Wegener, Ingo (2001).
\newblock Methods for the analysis of evolutionary algorithms on
  pseudo-{B}oolean functions.
\newblock In Sarker, Ruhul, Mohammadian, Masoud, and Yao, Xin (eds.),
  \emph{Evolutionary Optimization}. Kluwer Academic Publishers.

\bibitem[{Witt(2003)}]{WittCEC03}
Witt, Carsten (2003).
\newblock Population size vs. runtime of a simple {EA}.
\newblock In \emph{Proc.\ of the Congress on Evolutionary Computation
  (CEC~2003)}, vol.~3, 1996--2003. IEEE Press.

\end{thebibliography}

\end{document}